\documentclass{article} 
\usepackage{wise_preprint,times}

\usepackage{amsmath,amsfonts,bm}

\def\eqref#1{equation~\ref{#1}}

\def\1{\bm{1}}

\DeclareMathAlphabet{\mathsfit}{\encodingdefault}{\sfdefault}{m}{sl}
\SetMathAlphabet{\mathsfit}{bold}{\encodingdefault}{\sfdefault}{bx}{n}

\usepackage{bbding}
\usepackage{pifont}

\usepackage[utf8]{inputenc} 
\usepackage[T1]{fontenc}    
\usepackage{hyperref}       
\usepackage{url}            
\usepackage{amsfonts}       
\usepackage{nicefrac}       
\usepackage{algorithmicx}   
\usepackage{microtype}      
\usepackage{xcolor}         
\usepackage{siunitx}
\definecolor{lightcrimson}{rgb}{0.93, 0.16, 0.51}

\usepackage{amsmath, amssymb, amsthm}

\usepackage{booktabs}
\usepackage{threeparttable}
\usepackage{graphicx, subcaption, adjustbox, wrapfig, multirow, array, tabularx, enumitem, ulem,colortbl}
\usepackage{xcolor, hyperref}
\usepackage[table,xcdraw]{xcolor}
\usepackage{tikz}
\usepackage{algorithm}
\usepackage{algpseudocode}
\PassOptionsToPackage{numbers, compress}{natbib} 
\usepackage{amsthm}
\makeatletter
\renewenvironment{proof}[1][\proofname]{%
  \par\vspace{-4pt}\pushQED{\qed}\normalfont
  \topsep0pt \partopsep0pt 
  \trivlist
  \item[\hskip\labelsep\itshape
    #1\@addpunct{.}]\ignorespaces
}{%
  \popQED\endtrivlist\@endpefalse
  \addvspace{-4pt} 
}
\makeatother

\theoremstyle{plain}
\newtheorem{theorem}{Theorem}[section]

\newtheorem{proposition}{Proposition}
\title{Attention Routing Stabilizes Early: Working-Set Inference for Recurrent-Depth Language Models}

\author{
Ke Wan \\
Department of Computer Science \\
University of Virginia \\
Charlottesville, VA, USA \\
\texttt{tbn5pj@virginia.edu}
\And
Chen Chen\thanks{Corresponding author.} \\
Department of Computer Science \\
University of Virginia \\
Charlottesville, VA, USA \\
\texttt{zrh6du@virginia.edu}
}

\iclrfinalcopy 
\begin{document}

\maketitle

\begin{abstract}
Recurrent-depth language models, such as looped Transformers, repeatedly apply shared network blocks to refine latent representations, enabling additional test-time computation without generating explicit intermediate reasoning tokens.
However, each recurrent step recomputes full self-attention over the entire context, repeating global attention routing whose cost is quadratic in context length.
We study how attention routing evolves across recurrent depth and uncover a consistent separation in convergence timescales: routing-related quantities, including the attention support and the attention distribution, stabilize substantially earlier than representation-related quantities such as hidden states and attention outputs.
This suggests a two-stage structure in recurrent inference, where the model first discovers a sparse working set of relevant context and then continues refining representations over largely the same routing support.
Motivated by this structure, we introduce \texttt{WISE} (\textbf{W}orking-set \textbf{I}nference with \textbf{S}upport \textbf{E}xploitation), a training-free method that uses unrestricted global attention during early recurrent steps to discover a block-structured working set, and then reuses it as the routing support in later steps while keeping recurrent refinement and within-support attention computation dynamic.
Controlled interventions show that discovering the working set over multiple recurrent steps, rather than from the first step alone, yields more effective working sets, and that reusing only the routing support better preserves model behavior than more restrictive forms of late attention reuse.
Across multi-hop QA benchmarks, \texttt{WISE} largely preserves the behavior of full-attention inference, while matched context-scaling experiments reveal an increasingly favorable quality--efficiency tradeoff as the routing support becomes increasingly sparse with longer contexts.
A sparse-attention implementation translates this structured sparsity into practical GPU acceleration, achieving up to a $1.76\times$ late-step attention speedup over native FlashAttention at 4K context.
Our code is available at \url{https://github.com/tbn5pj/WISE_code}.

\end{abstract}

\section{Introduction}

Recurrent-depth language models provide a new axis for test-time computation by repeatedly applying shared network blocks to refine latent representations \citep{NEURIPS2025_3b01972c,dehghani2018universal,yang2024looped,giannou2023looped,rodkin-etal-2026-beyond}, rather than generating explicit intermediate reasoning tokens \citep{NEURIPS2022_8bb0d291}.
This decouples parameter count from inference-time compute, allowing additional computation to be allocated through recurrent depth rather than additional parameters \citep{dehghani2018universal,graves2016adaptive}.
Prior work has shown that additional recurrent computation can improve multi-step reasoning \citep{NEURIPS2025_3b01972c}.
However, existing recurrent inference recomputes full self-attention over the entire context at every recurrent step, repeating global attention routing (i.e., exchanging information across all token positions), whose cost is quadratic in context length.
It remains unclear whether later steps still need to search the full context, or whether they mainly continue refining representations over context that earlier steps have already identified as relevant.
Rather than asking how much recurrent computation to use, we take this budget as given and ask: \emph{must every recurrent step continue to perform unrestricted global attention routing?}

To answer this question, we study how attention routing evolves across recurrent depth and uncover a consistent separation in convergence timescales: routing-related quantities, which describe \emph{where} each token attends (the set of keys receiving most attention mass, i.e., the attention support, and the attention distribution over keys), stabilize substantially earlier than representation-related quantities, which describe \emph{how} the retrieved information is processed (hidden states and attention outputs).
Whether the attention support or the attention distribution stabilizes first varies across tasks and models, but routing consistently stabilizes before representation across benchmarks, a range of convergence thresholds, and two different recurrent-depth models.
As illustrated in Figure~\ref{fig:mechanism}, recurrent models can settle on a stable region of information access while their representations are still evolving.
In other words, the model largely determines \emph{where to look} before it finishes refining \emph{how to use} the retrieved information.

This separation suggests a two-stage structure in recurrent inference: early steps perform global search to discover a sparse \emph{working set} of relevant context, while later steps continue refining representations over largely the same routing support.
Standard recurrent inference does not exploit this structure, yet if late recurrent steps attend only to a working set of $m \ll n$ keys per query, where $n$ is the context length, their routing cost can in principle drop from $O(n^2)$ to $O(nm)$ without reducing recurrent depth, with savings that grow as the context becomes longer.

Motivated by this structure, we introduce \texttt{WISE} (\textbf{W}orking-set \textbf{I}nference with \textbf{S}upport \textbf{E}xploitation), a training-free inference method illustrated in Figure~\ref{fig:method}.
\texttt{WISE} uses unrestricted global attention during an early discovery phase to identify the working set, and then keeps it fixed as the routing support for the remaining recurrent steps.
To make this sparsity exploitable on GPUs, the working set is constructed directly over contiguous blocks of tokens, enabling efficient block-sparse execution.
Crucially, \texttt{WISE} fixes only \emph{where} attention may route: hidden representations, queries, keys, values, and attention weights within the working set continue to be recomputed at every step.
Thus, \texttt{WISE} preserves late representation refinement while avoiding repeated global search over positions outside the discovered working set.

\begin{figure*}[t]
\centering

\begin{minipage}[c]{0.34\textwidth}
\centering
\includegraphics[
    height=4.15cm,
    keepaspectratio
]{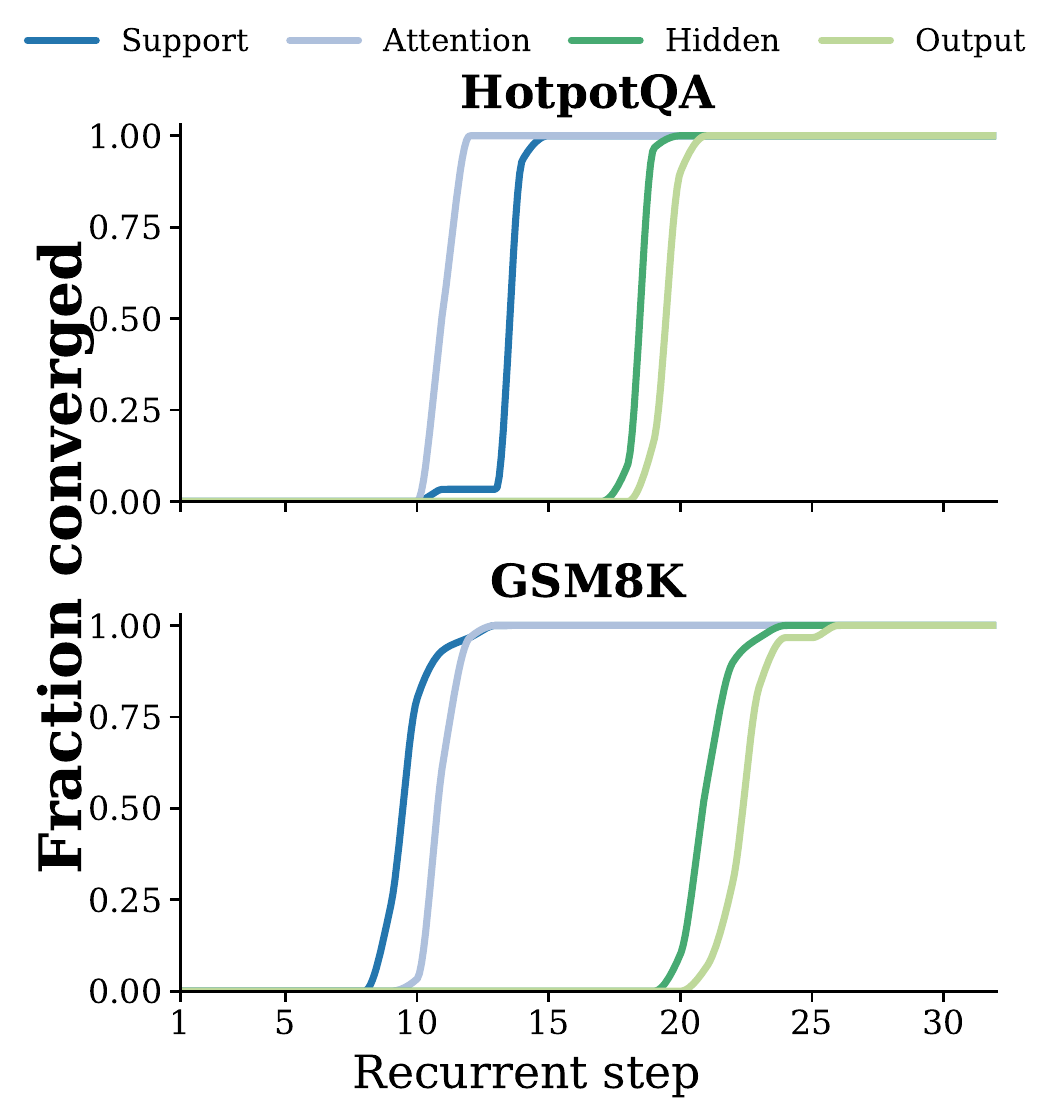}
\phantomsubcaption
\label{fig:mechanism}
\par
\vspace{0.5mm}
\textbf{(\subref{fig:mechanism}) Convergence timescales}
\end{minipage}
\hspace{-0.005\textwidth}
\begin{minipage}[c]{0.58\textwidth}
\centering
\includegraphics[
    height=4.15cm,
    keepaspectratio
]{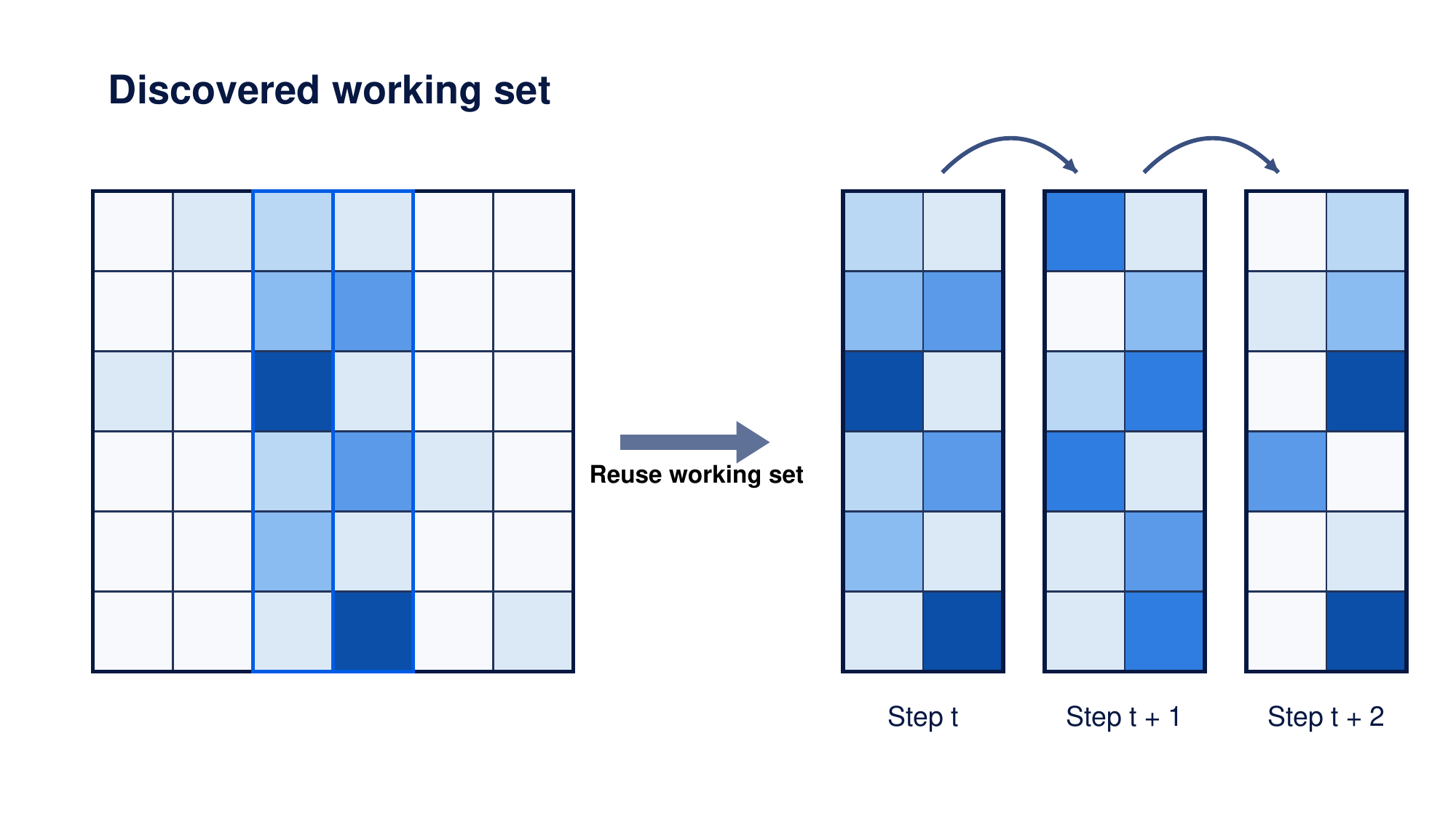}
\phantomsubcaption
\label{fig:method}
\par
\vspace{0.5mm}
\textbf{(\subref{fig:method}) Working-set reuse}
\end{minipage}

\caption{
\textbf{Routing stabilizes before representation refinement is complete.}
\textbf{(a)} Fraction of examples converged at each recurrent step for Huginn, on 30 benchmark-native examples from each of HotpotQA and GSM8K.
Routing-related quantities (diagnostic token-level attention support and the attention distribution) stabilize substantially earlier than representation-related quantities (hidden states and attention outputs).
Whether the attention support or the attention distribution stabilizes first varies across tasks, but the separation between routing and representation is consistent across both tasks.
\textbf{(b)} This separation motivates \texttt{WISE}: early recurrent steps use unrestricted global attention to discover a sparse block-structured working set, and later steps reuse only this routing support, while hidden representations, queries, keys, values, and within-support attention weights continue to evolve.
Thus, \texttt{WISE} reuses \emph{where} attention may route without freezing \emph{how} information within the working set is processed: \emph{discover early, reuse late}.
}
\label{fig:wise_motivation}

\end{figure*}

Our experiments support both the mechanism and its computational implications.
Controlled interventions show that discovering the working set over multiple recurrent steps, rather than from the first step alone, yields more effective working sets, and that reusing only the routing support preserves model behavior better than more restrictive alternatives.
Across multi-hop QA benchmarks, \texttt{WISE} largely preserves the behavior of full-attention inference, while matched context-scaling experiments reveal an increasingly favorable quality--efficiency tradeoff as the discovered working set becomes sparser with longer contexts.
This structured sparsity translates into practical GPU acceleration, yielding up to a $1.76\times$ late-step attention speedup over native FlashAttention at 4K context.
Together, these results show that recurrent routing structure can become computationally reusable while late representation refinement remains valuable.

\paragraph{Contributions.}
We summarize our contributions as follows:
(i) \textbf{Mechanism discovery.}
We identify a consistent separation between routing and representation timescales in recurrent-depth language models: the attention support and the attention distribution stabilize substantially earlier than hidden representations and attention outputs, suggesting that information-access structure can become reusable before recurrent refinement is complete.
(ii) \textbf{Method.}
We introduce \texttt{WISE}, a training-free inference method that discovers a sparse block-structured working set using unrestricted attention in early recurrent steps and reuses it as the routing support in later steps.
\texttt{WISE} preserves recurrent depth and keeps within-support computation dynamic, while reducing late attention routing cost from $O(n^2)$ to $O(nm)$.
(iii) \textbf{Empirical validation and efficiency.}
We causally validate support-only reuse through matched interventions, replicate the routing--representation separation on a second recurrent-depth model, and identify when multi-step discovery is most beneficial: when routing continues to reorganize during early recurrence.
Across context lengths, \texttt{WISE} achieves a favorable quality--efficiency tradeoff, as increasingly sparse routing support translates into practical attention acceleration while largely preserving downstream quality through 2K context.

\section{Related Work}

\textbf{Recurrent and Latent Test-Time Computation.}
Recurrent architectures increase effective computational depth by repeatedly applying shared network blocks, from \textbf{Universal Transformers} to more recent looped Transformers for iterative computation and reasoning \citep{dehghani2018universal,giannou2023looped,yang2024looped,saunshi2025reasoning}. Recent work extends this idea to language-model inference, using recurrent depth to allocate additional test-time computation and improve multi-step reasoning \citep{NEURIPS2025_3b01972c,rodkin-etal-2026-beyond,zhang2026modr,kohli2026loop}. Related approaches reason directly in continuous latent states \citep{hao2024training}. These works establish recurrent refinement as a useful computational mechanism. \texttt{WISE} instead asks whether every recurrent step must continue to perform unrestricted global attention routing, and studies how routing and representation refinement evolve across recurrent depth.

\textbf{Adaptive Transformer Computation.}
A complementary line of work reduces inference cost by adapting how much computation different inputs or tokens receive. \textbf{Adaptive Computation Time}, \textbf{Mixture-of-Depths}, and early-exit methods dynamically reduce recurrent or layer depth \citep{graves2016adaptive,banino2021pondernet,elbayad2019depth,raposo2024mixture,xin2020deebert,zhou2020bert,elhoushi-etal-2024-layerskip}. \texttt{WISE} is orthogonal: it preserves recurrent depth and reduces only the scope of late global routing, while keeping attention weights and hidden-state updates dynamic.


\textbf{Efficient and Sparse Attention.}
Kernel-level methods such as \textbf{FlashAttention} and \textbf{FlashAttention-2} accelerate exact dense attention through IO-aware tiling and improved parallelism \citep{dao2022flashattention,ICLR2024_98ed250b}. Long-context methods further reduce attention cost through KV-cache compression or sparse retrieval, including \textbf{H$_2$O}, \textbf{SnapKV}, \textbf{Quest}, and \textbf{MInference} \citep{NEURIPS2023_6ceefa7b,li2024snapkv,tang2024quest,NEURIPS2024_5dfbe6f5}. Other work reuses attention patterns across layers or decoding steps \citep{xiao2019sharing,xu-etal-2025-refreshkv,deshmukh2025kascade}. \texttt{WISE} instead targets recurrent-depth language models and reuses only the routing support after it stabilizes, while within-support attention and representations remain dynamic; this support-level sparsity complements efficient kernels such as FlashAttention.

\section{Problem Setup}
\label{sec:problem_setup}

\paragraph{Recurrent inference.}
Consider a recurrent-depth language model that repeatedly applies a shared transformation to a sequence of hidden representations. Given a sequence of length $n$, let $\boldsymbol{H}^{(t)} \in \mathbb{R}^{n \times d}$ denote the hidden states after recurrent step $t$, where $d$ is the hidden dimension. The recurrent computation is written as
\begin{equation}
    \boldsymbol{H}^{(t+1)}
    =
    F_{\theta}\!\left(\boldsymbol{H}^{(t)}\right),
    \qquad
    t=0,\ldots,T-1,
\label{eq:recurrent_dynamics}
\end{equation}
where the parameters $\theta$ are shared across recurrent steps and $T$ denotes the recurrent depth. This formulation abstracts away model-specific non-recurrent components and focuses on the computation repeatedly applied within the recurrent core.

\paragraph{Recurrent attention.}
At each recurrent step, self-attention is recomputed from the current hidden representations. Suppressing recurrent-block and attention-head indices for clarity, we define $\boldsymbol{Q}^{(t)}=\boldsymbol{H}^{(t)}\boldsymbol{W}_{Q}$, $\boldsymbol{K}^{(t)}=\boldsymbol{H}^{(t)}\boldsymbol{W}_{K}$, and $\boldsymbol{V}^{(t)}=\boldsymbol{H}^{(t)}\boldsymbol{W}_{V}$, where $\boldsymbol{Q}^{(t)},\boldsymbol{K}^{(t)},\boldsymbol{V}^{(t)}\in\mathbb{R}^{n\times d_h}$ and $d_h$ is the attention-head dimension. The corresponding attention matrix and attention output are
\begin{equation}
    \boldsymbol{A}^{(t)}
    =
    \operatorname{softmax}\!\left(
        \frac{\boldsymbol{Q}^{(t)}\boldsymbol{K}^{(t)\top}}{\sqrt{d_h}}
        +\boldsymbol{M}
    \right),
    \qquad
    \boldsymbol{O}^{(t)}
    =
    \boldsymbol{A}^{(t)}\boldsymbol{V}^{(t)},
\label{eq:recurrent_attention}
\end{equation}
where $\boldsymbol{M}$ denotes the attention mask. For query position $i$, we write $\boldsymbol{a}_{i}^{(t)}=\boldsymbol{A}^{(t)}_{i,:}$ for its attention distribution and let $\mathcal{K}_{i}\subseteq\{1,\ldots,n\}$ denote the keys admissible under $\boldsymbol{M}$.

\paragraph{Attention support and working sets.}
For a mass threshold $\eta\in(0,1)$, let $S_{i}^{(t,\eta)}\subseteq\mathcal{K}_{i}$ denote the smallest set of admissible keys whose cumulative attention mass under $\boldsymbol{a}_{i}^{(t)}$ is at least $\eta$; we refer to $S_{i}^{(t,\eta)}$ as the \emph{attention support}. More generally, a \emph{working set} is a sparse routing support discovered during early recurrent computation and reused at later recurrent steps. For structured execution, we partition the sequence into blocks of size $B$ and denote by $\mathcal{B}_{q}^{(t,\eta)}$ the active key-block support associated with query block $q$ at step $t$; its construction from block-level attention mass is specified in Section~\ref{sec:wise}. Unless explicitly shown, recurrent-block and attention-head indices are suppressed for both $S_{i}^{(t,\eta)}$ and $\mathcal{B}_{q}^{(t,\eta)}$. Standard recurrent inference recomputes global attention over all admissible query--key pairs at every recurrent step, even when this routing structure has already stabilized.

\section{Computational Structure of Recurrent Inference}
\label{sec:computational_structure}

The distinction between recurrent representation refinement and attention routing introduced in Section~\ref{sec:problem_setup} has a direct computational consequence. For a sequence of length $n$, linear projections and feed-forward layers scale as $O(nd^2)$, whereas self-attention computes pairwise query--key interactions through $\boldsymbol{Q}^{(t)}\boldsymbol{K}^{(t)\top}$ and aggregates values through $\boldsymbol{A}^{(t)}\boldsymbol{V}^{(t)}$, resulting in $O(n^2d)$ computation across attention heads. Over $T$ recurrent steps, the leading-order cost of standard full-attention inference is therefore
\begin{equation}
    C_{\mathrm{full}}
    =
    O\!\left(Tnd^2\right)
    +
    O\!\left(Tn^2d\right),
\label{eq:full_recurrent_cost}
\end{equation}
where the second term is the only component that grows quadratically with context length and corresponds to repeatedly recomputing global attention routing. Figure~\ref{fig:wise_motivation}\subref{fig:mechanism} reveals a mismatch between this computation and the dynamics of recurrent inference: routing-related quantities stabilize substantially earlier than hidden representations and attention outputs, while standard recurrent inference continues to recompute attention over the full context at every step. If early recurrent computation has identified a working set containing on average $m \ll n$ relevant keys per query, late recurrent attention needs only to compare each query against these $m$ keys, reducing the support-dependent attention cost from
\begin{equation}
    O(n^2d)
    \quad\longrightarrow\quad
    O(nmd).
\label{eq:working_set_cost}
\end{equation}
Equivalently, defining the normalized working-set density $\rho=m/n$, the late-stage cost is $O(\rho n^2d)$. The $O(nmd)$ form makes explicit that late routing depends on the size of the discovered working set rather than the full context: if $m$ grows more slowly than $n$, the routing cost becomes subquadratic, and it becomes linear in $n$ when $m$ is bounded. This creates an opportunity to reduce repeated global routing while preserving the recurrent evolution of $\boldsymbol{Q}^{(t)}$, $\boldsymbol{K}^{(t)}$, $\boldsymbol{V}^{(t)}$, and the within-support attention weights, motivating \texttt{WISE}: \emph{discover globally early, then reuse the working set late}.


\section{\texttt{WISE}: Discover Early, Reuse Late}
\label{sec:wise}

Section~4 shows that late recurrent attention can reduce its routing cost from $O(n^2d)$ to $O(nmd)$ if computation is restricted to a sparse working set of $m\ll n$ keys. 
WISE realizes this structure by separating recurrent inference into two stages: unrestricted global attention discovers the relevant routing support early, and the discovered support is reused later while recurrent refinement and all within-support computation remain dynamic.
The support-reuse principle itself is agnostic to routing granularity, but unstructured token-level sparsity is difficult to translate directly into efficient GPU execution.
We therefore instantiate \texttt{WISE} with block-structured support, trading some fine-grained sparsity for regular, reusable computation that block-sparse kernels can execute efficiently.

\subsection{Block-Structured Working-Set Discovery}
\label{sec:wise_discovery}

During the first $t_{\mathrm d}$ recurrent steps, \texttt{WISE} retains unrestricted attention over the full context. We partition the sequence into blocks of size $B$ and let $\mathcal{I}_q$ denote the token positions in query block $q$ and $\mathcal{I}_b$ those in key block $b$. Suppressing recurrent-block and attention-head indices as in Section~\ref{sec:problem_setup}, we define the attention mass assigned from query block $q$ to key block $b$ at step $t$ as
\begin{equation}
    r_{q b}^{(t)}
    =
    \frac{1}{|\mathcal{I}_q|}
    \sum_{i\in\mathcal{I}_q}
    \sum_{j\in\mathcal{I}_b}
    \big[\boldsymbol{A}^{(t)}\big]_{ij}.
\label{eq:block_attention_mass}
\end{equation}
For a mass threshold $\eta$, let $\mathcal{B}_{q}^{(t,\eta)}$ be the smallest set of admissible key blocks whose cumulative block-level mass reaches $\eta$. The working set reused after discovery is
\begin{equation}
    \mathcal{B}_{q}
    =
    \bigcup_{t=t_{\mathrm d}-3}^{t_{\mathrm d}}
    \mathcal{B}_{q}^{(t,\eta)},
    \qquad
    \mathcal{B}_{q}^{(t,\eta)}
    =
    \underset{\mathcal{B}}{\arg\min}\ |\mathcal{B}|
    \quad
    \mathrm{s.t.}
    \quad
    \sum_{b\in\mathcal{B}} r_{qb}^{(t)}
    \geq \eta .
\label{eq:block_working_set}
\end{equation}
Our default configuration uses $T=32$, $t_{\mathrm d}=12$, $B=32$, and $\eta=0.95$, so the working set is the union of direct block-level $95\%$-mass supports from recurrent steps $9$--$12$. 
We examine sensitivity to the cumulative-mass threshold in Appendix~\ref{app:eta_sensitivity}.
The support is constructed independently for each recurrent block, attention head, and query block. 
Importantly, \texttt{WISE} discovers support directly in block space rather than constructing a token-level support and subsequently rounding it to blocks.
This aligns the discovery objective with the granularity the sparse kernel actually executes, avoiding a mismatch between fine-grained support selection and block-structured computation.

\subsection{Working-Set Reuse}
\label{sec:wise_reuse}

For recurrent steps $t>t_{\mathrm d}$, the discovered block support $\mathcal{B}_q$ remains fixed, but the hidden representations and attention computation continue to evolve. Let $q(i)$ and $b(j)$ denote the query and key blocks containing positions $i$ and $j$, respectively, and define the working-set mask
\begin{equation}
    \big[\boldsymbol{R}\big]_{ij}
    =
    \begin{cases}
        0, & b(j)\in\mathcal{B}_{q(i)},\\
        -\infty, & \text{otherwise}.
    \end{cases}
\end{equation}
The late-stage attention is recomputed as
\begin{equation}
    \widetilde{\boldsymbol{A}}^{(t)}
    =
    \operatorname{softmax}\!\left(
        \frac{
            \boldsymbol{Q}^{(t)}
            \boldsymbol{K}^{(t)\top}
        }{\sqrt{d_h}}
        +
        \boldsymbol{M}
        +
        \boldsymbol{R}
    \right),
    \qquad
    \widetilde{\boldsymbol{O}}^{(t)}
    =
    \widetilde{\boldsymbol{A}}^{(t)}
    \boldsymbol{V}^{(t)} .
\label{eq:wise_attention}
\end{equation}
Thus, \texttt{WISE} reuses only the routing support: $\boldsymbol{H}^{(t)}$, $\boldsymbol{Q}^{(t)}$, $\boldsymbol{K}^{(t)}$, $\boldsymbol{V}^{(t)}$, and the attention weights within the working set remain dynamic throughout recurrence. It therefore fixes \emph{where} attention may route without freezing \emph{how} the retained information is weighted and transformed.

\subsection{Computational Complexity}
\label{sec:wise_complexity}

Let $m$ denote the average number of key positions represented by the active key blocks for each query during the reuse phase, let $\rho=m/n$ denote the corresponding normalized working-set density, and let $\alpha=t_{\mathrm d}/T$ denote the fraction of recurrent steps devoted to discovery. Full attention incurs $O(n^2d)$ routing cost at every recurrent step, whereas block-structured reuse reduces the late-stage cost to $O(nmd)$. Ignoring block-boundary and kernel overhead, the relative attention-routing cost is therefore
\begin{equation}
    \frac{
        C_{\mathrm{WISE}}^{\mathrm{attn}}
    }{
        C_{\mathrm{full}}^{\mathrm{attn}}
    }
    \approx
    \alpha+(1-\alpha)\rho,
    \qquad
    \Delta C^{\mathrm{attn}}
    \approx
    (1-\alpha)(1-\rho).
\label{eq:wise_complexity}
\end{equation}
The gain increases as the working set becomes sparse relative to the full context. Equivalently, the late recurrent routing term changes from $O(n^2d)$ to $O(nmd)$, replacing dependence on the full key set with dependence on the discovered working-set size while preserving the recurrent refinement identified in Section~\ref{sec:computational_structure}.

\section{Theoretical Analysis}
\label{sec:theory}

The design of \texttt{WISE} relies on the possibility that a discrete routing support becomes stable before the underlying recurrent representation converges. We formalize this behavior for the block-level cumulative-mass support used in Section~\ref{sec:wise_discovery}. The analysis does not predict the empirical switching depth $t_{\mathrm d}$; rather, it explains why a reusable routing structure can emerge at finite recurrent depth while representation refinement continues.

\subsection{Finite-Time Identification of the Working Set}
\label{sec:finite_identification}

Fix a recurrent block, attention head, and query block $q$, and let $\boldsymbol{r}_{q}^{(t)}=[r_{qb}^{(t)}]_b$ denote its block-level attention-mass distribution as defined in Equation~\ref{eq:block_attention_mass}. Let $\mathcal{B}_{q}^{(t,\eta)}$ be the corresponding cumulative-$\eta$ support and suppose $\boldsymbol{H}^{(t)}\rightarrow\boldsymbol{H}^{\star}$, inducing a limiting block-mass distribution $\boldsymbol{r}_{q}^{\star}$ and support $\mathcal{B}_{q}^{\star}$. Let $k_q^{\star}=|\mathcal{B}_{q}^{\star}|$. We assume that the limiting support is nondegenerate: the $k_q^{\star}$-th and $(k_q^{\star}+1)$-th largest block masses are strictly separated, and $\eta$ lies strictly between the cumulative masses of the first $k_q^{\star}-1$ and $k_q^{\star}$ blocks. Let $\Delta_q>0$ denote the minimum margin to these ranking and cumulative-mass boundaries.

\begin{theorem}[Finite-Time Identification of Recurrent Routing Support]
\label{thm:support_identification}
Suppose the mapping from $\boldsymbol{H}^{(t)}$ to $\boldsymbol{r}_{q}^{(t)}$ is locally $L_q$-Lipschitz around $\boldsymbol{H}^{\star}$ and
\[
    \|\boldsymbol{H}^{(t)}-\boldsymbol{H}^{\star}\|
    \leq
    C\beta^t
\]
for some $0<\beta<1$.

Let $T_{0,q}$ be a finite step after which the
local Lipschitz bound applies along the trajectory.
Then there exists a finite $T_{S,q}$ such that
$\mathcal{B}_q^{(t,\eta)}=\mathcal{B}_q^\star$
for all $t\ge T_{S,q}$. One valid choice is
\begin{equation}
T_{S,q}
=
\max\left\{
T_{0,q},
\left\lfloor
\frac{\log(L_q C/\Delta_q)}
{-\log\beta}
\right\rfloor+1
\right\}.
\end{equation}

\end{theorem}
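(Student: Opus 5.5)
The plan is to reduce the statement to a deterministic stability property of the cumulative-mass support map, and then use the geometric convergence to enter the stability region in finite time. Throughout, the norm on block-mass vectors is the $\ell_\infty$ norm, so the Lipschitz constant $L_q$ is read with respect to that norm. I would interpret the margin $\Delta_q$ so that any perturbation of $\boldsymbol{r}_q^{\star}$ of size strictly below $\Delta_q$ leaves the support unchanged. If $\Delta_q$ is literally the raw gap, the stability region is $\Delta_q/2$ or $\Delta_q/(2k_q^\star)$; such constant factors can be absorbed into the definition of $\Delta_q$ as the ``minimum margin to these boundaries.''

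\textbf{Step 1 (stability lemma).} Let $\boldsymbol{r}$ satisfy $\|\boldsymbol{r}-\boldsymbol{r}_q^\star\|_\infty<\Delta_q$, with $\Delta_q$ normalized as above. I would show that the cumulative-$\eta$ support of $\boldsymbol{r}$ equals $\mathcal{B}_q^\star$.

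\emph{Ranking.} The strict separation between the $k_q^\star$-th and $(k_q^\star+1)$-th largest masses survives the perturbation. Hence the top-$k_q^\star$ block set of $\boldsymbol{r}$ coincides with $\mathcal{B}_q^\star$. Ties inside the top set may reorder, but this does not affect the set.

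\emph{Cumulative mass.} The top-$k_q^\star$ mass of $\boldsymbol{r}$ stays $\ge\eta$. The top-$(k_q^\star-1)$ mass stays $<\eta$; this holds for \emph{any} choice of $k_q^\star-1$ blocks, because the largest such sum is bounded by the perturbed largest sum of the limit.

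\emph{Minimality.} Since $\sum_{b\in\mathcal{B}}r_b$ over $|\mathcal{B}|=k$ is maximized by the top-$k$ blocks, the smallest feasible cardinality is exactly $k_q^\star$. The unique top-$k_q^\star$ set then gives the argmin.

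\textbf{Step 2 (entering the region).} For $t\ge T_{0,q}$, the local Lipschitz bound and the hypothesis give
\[
\|\boldsymbol{r}_q^{(t)}-\boldsymbol{r}_q^\star\|\le L_q\|\boldsymbol{H}^{(t)}-\boldsymbol{H}^\star\|\le L_qC\beta^t .
\]
We need $L_qC\beta^t<\Delta_q$, which is equivalent to $t>\log(L_qC/\Delta_q)/(-\log\beta)$ because $\log\beta<0$. The choice $t\ge\lfloor\log(L_qC/\Delta_q)/(-\log\beta)\rfloor+1$ gives a strict inequality. The case where the log is negative is harmless, since then $\max$ with $T_{0,q}$ handles it. Taking the max with $T_{0,q}$ and applying Step 1 at every such $t$ yields $\mathcal{B}_q^{(t,\eta)}=\mathcal{B}_q^\star$ for all $t\ge T_{S,q}$.

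\textbf{Main obstacle.} The delicate part is Step 1, specifically minimality under ties. Within $\mathcal{B}_q^\star$ the argmin is set-valued, so I would need to argue both of the following:
\begin{itemize}
\item no set of size $k_q^\star-1$ is feasible, which is handled by the max-sum bound;
\item every feasible set of size $k_q^\star$ equals $\mathcal{B}_q^\star$.
\end{itemize}
For the second point, swapping any block outside $\mathcal{B}_q^\star$ for one inside strictly decreases the mass by at least the ranking gap minus the perturbation. I would therefore try to show that such a swapped set also falls below $\eta$. If that fails, I would fix a deterministic tie-breaking rule (largest masses first), as the algorithm does in practice, which makes the argmin equal the top-$k_q^\star$ set directly.
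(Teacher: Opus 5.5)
Your proposal is correct and follows the paper's proof step for step. The paper's $\Delta_q=\min\{\delta_q^{\mathrm{rank}},\delta_q^{-}/(k_q^\star-1),\delta_q^{+}/k_q^\star\}$, with $\delta_q^{\mathrm{rank}}$ the half-gap, is exactly your normalized margin; the paper proves the same $\ell_\infty$ stability lemma (ranking preservation plus the two cumulative-mass bounds) and then enters the region via $L_qC\beta^t<\Delta_q$ for $t\ge T_{S,q}$.

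On your stated obstacle, the swap argument cannot work in general. For example, $\boldsymbol{r}^\star=(0.4,0.35,0.25)$ with $\eta=0.55$ satisfies both nondegeneracy conditions, yet $\{1,3\}$ and $\{2,3\}$ are also feasible two-block sets. So your fallback, reading $\mathcal{B}_q^{(t,\eta)}$ as the highest-mass-first (top-$k$) set, is necessary. It is also precisely the convention the paper uses implicitly (``smallest set of highest-mass admissible key blocks'') when it concludes that membership is unchanged.
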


Theorem~\ref{thm:support_identification} shows that exact routing-support identification can occur at finite recurrent depth even when $\boldsymbol{H}^{(t)}$ approaches $\boldsymbol{H}^{\star}$ only asymptotically. Once $\boldsymbol{r}_{q}^{(t)}$ lies sufficiently far from every ranking and cumulative-mass boundary, further continuous changes cannot alter the selected block support. Since the model contains finitely many recurrent blocks, attention heads, and query blocks, taking the maximum of their identification times yields a finite depth after which all nondegenerate supports are simultaneously stable. If the temporal-union window used by \texttt{WISE} lies beyond this depth, the union reduces exactly to the limiting support; before exact stabilization, the union provides a conservative buffer against local routing fluctuations. Formal definitions of $\Delta_q$ and the proof are provided in Appendix~\ref{app:theory_proofs}.

\subsection{Support Stabilization and Representation Convergence}
\label{sec:support_representation}

For query block $q$, define the routing region
\[
    \mathcal{R}_q(\mathcal{B}_q^{\star})
    =
    \left\{
        \boldsymbol{H}:
        \mathcal{B}_q^{(\eta)}(\boldsymbol{H})
        =
        \mathcal{B}_q^{\star}
    \right\}.
\]

\begin{proposition}[Support Stabilization Does Not Imply Representation Convergence]
\label{prop:necessary_support}
Under the assumptions of Theorem~\ref{thm:support_identification},
$\boldsymbol{H}^{(t)}\rightarrow\boldsymbol{H}^{\star}$
implies finite-time stabilization of the cumulative-mass routing support. Moreover,
\begin{equation}
    \mathbb{B}\!\left(
        \boldsymbol{H}^{\star},
        \frac{\Delta_q}{L_q}
    \right)
    \subseteq
    \mathcal{R}_q(\mathcal{B}_q^{\star}),
\label{eq:stable_routing_region}
\end{equation}
so support stabilization alone is insufficient to imply convergence of $\boldsymbol{H}^{(t)}$.
\end{proposition}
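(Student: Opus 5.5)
The plan has three parts: obtain the finite-time claim from Theorem~\ref{thm:support_identification}, prove the ball inclusion by a perturbation argument in block-mass space, and read off the non-implication from the fact that the region has nonempty interior.

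\textbf{Finite-time stabilization.} The first assertion is a restatement of Theorem~\ref{thm:support_identification}. Under its hypotheses, $\boldsymbol{H}^{(t)}\to\boldsymbol{H}^{\star}$ at rate $C\beta^t$ together with the local Lipschitz bound yields a finite $T_{S,q}$ beyond which $\mathcal{B}_q^{(t,\eta)}=\mathcal{B}_q^{\star}$. I will cite it directly.

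\textbf{Ball inclusion.} Take any $\boldsymbol{H}$ with $\|\boldsymbol{H}-\boldsymbol{H}^{\star}\|<\Delta_q/L_q$, where the ball is chosen inside the neighborhood on which the local Lipschitz bound holds (shrink the radius if needed). Then $\|\boldsymbol{r}_q(\boldsymbol{H})-\boldsymbol{r}_q^{\star}\|<\Delta_q$. The core step, which is also the one used inside the proof of Theorem~\ref{thm:support_identification}, is a deterministic lemma: any block-mass vector within $\Delta_q$ of $\boldsymbol{r}_q^{\star}$ has the same cumulative-$\eta$ support. I will check three things.
\begin{itemize}
\item[(i)] The separation between the $k_q^{\star}$-th and $(k_q^{\star}+1)$-th largest masses exceeds the perturbation, so the top-$k_q^{\star}$ set is unchanged. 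This requires the norm on $\boldsymbol{r}$ and the definition of $\Delta_q$ to be aligned, e.g.\ $\ell_\infty$ with $\Delta_q$ equal to half the gap.
\item[(ii)] The cumulative mass of those $k_q^{\star}$ blocks still reaches $\eta$.
\item[(iii)] The cumulative mass of any $k_q^{\star}-1$ blocks stays below $\eta$. It is not enough that the top $k_q^{\star}-1$ stay below; the largest $(k_q^{\star}-1)$-sum is controlled by the sorted prefix, and reordering within the top set does not matter.
\end{itemize}
Together, (i)--(iii) show the minimal cardinality is exactly $k_q^{\star}$ and the minimizer is $\mathcal{B}_q^{\star}$. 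This gives $\boldsymbol{H}\in\mathcal{R}_q(\mathcal{B}_q^{\star})$.

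\textbf{Main obstacle.} I expect the hardest part to be fixing the norm conventions. Perturbations of size $\Delta_q$ per coordinate sum to up to $k\Delta_q$ in cumulative masses. I would first define $\Delta_q$ in the appendix so that it already absorbs this factor, i.e.\ the margin is measured in the norm where the cumulative sums are $1$-Lipschitz, such as $\ell_1$. If that convention is not available, I would instead define $\Delta_q$ as the minimum of the rank gap and the cumulative gap divided by $k_q^{\star}$. Ties among blocks in the top set also need care, since the argmin may be non-unique. Strict separation at position $k_q^{\star}$ makes the minimal set unique, which handles this.

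\textbf{Non-implication.} The ball $\mathbb{B}(\boldsymbol{H}^{\star},\Delta_q/L_q)$ has positive radius, so $\mathcal{R}_q(\mathcal{B}_q^{\star})$ contains points other than $\boldsymbol{H}^{\star}$, indeed an open set of them. I will take any trajectory that stays in this ball without converging, for instance one oscillating between two distinct interior points. Its support equals $\mathcal{B}_q^{\star}$ at every step, yet $\boldsymbol{H}^{(t)}$ does not converge. Therefore support stabilization does not imply representation convergence, while the converse implication is supplied by the first part.
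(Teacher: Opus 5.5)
Your proposal is correct and follows the paper's proof step for step: the same $\ell_\infty$ perturbation lemma with $\Delta_q=\min\{\delta_q^{\mathrm{rank}},\,\delta_q^{-}/(k_q^{\star}-1),\,\delta_q^{+}/k_q^{\star}\}$ (your fallback convention, which is essentially the paper's), then the Lipschitz bound to get the ball inclusion, then the ball's positive radius to rule out the converse. The paper derives the finite-time claim from the ball itself, which needs only $\boldsymbol{H}^{(t)}\to\boldsymbol{H}^{\star}$, but citing the theorem is equally valid here.

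Two small corrections. First, shrinking the radius would prove a weaker inclusion than the displayed one, so you should instead take the Lipschitz bound to hold on all of $\mathbb{B}(\boldsymbol{H}^{\star},\Delta_q/L_q)$, as the paper tacitly does. Second, strict separation at position $k_q^{\star}$ makes the top-$k_q^{\star}$ set unique but not the minimum-cardinality set: for $\boldsymbol{r}_q^{\star}=(0.5,0.3,0.2)$ and $\eta=0.65$, both $\{1,2\}$ and $\{1,3\}$ qualify. So, as in the paper, $\mathcal{B}_q^{(t,\eta)}$ must be read as the greedy highest-mass selection.
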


Proposition~\ref{prop:necessary_support} shows that a fixed routing support identifies a region of representation space rather than a unique representation. The recurrent state can therefore continue to evolve while remaining inside the same routing region. This is precisely the regime exploited by \texttt{WISE}: the block support can be reused while $\boldsymbol{H}^{(t)}$, $\boldsymbol{Q}^{(t)}$, $\boldsymbol{K}^{(t)}$, $\boldsymbol{V}^{(t)}$, and the within-support attention weights continue to change. The proof is provided in Appendix~\ref{app:theory_proofs}.
Importantly, this result does not imply that recurrence should terminate when the support stabilizes; rather, it explains why a discrete routing structure can become
reusable while recurrent representation refinement continues.

\section{Experiments}

\subsection{Experimental Setup}
\label{sec:experimental_setup}


\paragraph{Models and benchmarks.}
Our primary experiments use the publicly available Huginn recurrent-depth language model with recurrent depth $T=32$ \citep{NEURIPS2025_3b01972c}.
To assess whether the observed recurrent dynamics generalize beyond the primary backbone, we additionally evaluate a recurrent Llama-3.2 model trained with $T=32$ recurrent steps (Recurrent-Llama-T32) \citep{mcleish2025teaching}.
The replication uses the same HotpotQA \citep{yang2018hotpotqa} and GSM8K \citep{cobbe2021trainingverifierssolvemath} mechanism diagnostics and the same HotpotQA and 2WikiMultiHopQA \citep{ho2020constructing} behavioral evaluations. Unless otherwise specified, benchmark-level comparisons use 100 deterministically selected examples with matched inputs and recurrent initialization across methods, while mechanism diagnostics use fixed 30-example cohorts. The Recurrent-Llama replication preserves the same \texttt{WISE} configuration used for Huginn; model-specific instrumentation and prompting details are provided in Appendix~\ref{app:experimental_details}.

\textbf{\texttt{WISE} configuration.}
We use a single frozen configuration throughout the final experiments: block size $B=32$, discovery depth $t_{\mathrm d}=12$, mass threshold $\eta=0.95$, and a four-step discovery window spanning recurrent steps $9$--$12$. The working set is constructed directly in block space and reused for $t>t_{\mathrm d}$, while hidden states, queries, keys, values, and within-support attention weights remain dynamic. We use the same configuration across benchmarks and context lengths without benchmark-specific tuning.


\paragraph{Evaluation.}
We report token-level F1 for downstream quality, together with paired behavioral comparisons, answer changes, teacher-forced gold-answer loss for causal interventions, block density, and retained future Full-attention mass.
Mechanism analysis uses fixed convergence criteria for attention support, attention distributions, hidden states, and attention outputs;
Appendix~\ref{app:experimental_details} provides complete metric definitions, statistical procedures, and systems protocols.

\subsection{Mechanism Discovery}
\label{sec:mechanism}

\paragraph{Routing Stabilizes Before Representation Refinement.}

We quantify how different components of recurrent computation stabilize across depth.
For the diagnostic token support, convergence is defined as the earliest step at which consecutive supports maintain Jaccard similarity of at least $0.90$ for three successive transitions; for the attention distribution, hidden state, and attention output, convergence is defined using a matched three-transition criterion based on changes relative to their early-trajectory scale.
Full definitions are provided in Appendix~\ref{app:experimental_details}.
As shown in Figure~\ref{fig:wise_motivation}\subref{fig:mechanism}, routing-related quantities consistently stabilize earlier than representation-related quantities across both HotpotQA and GSM8K, although the precise ordering within routing is task-dependent.
This separation suggests that recurrent inference enters a regime in which the model has largely settled on where to retrieve information while representation refinement is still ongoing.
Because the diagnostic token support differs from the block-structured support used by \texttt{WISE}, we separately analyze the deployed $B=32$, $\eta=0.95$ cumulative-mass support.
The resulting steps-$9$--$12$ temporal union enters a high-stability regime near the discovery depth $t_{\mathrm d}=12$, providing a method-aligned basis for late support reuse (Appendix~\ref{app:block_support_stability}).

\paragraph{Robustness and cross-backbone replication.}
The routing--representation separation persists across consistently varied convergence criteria and replicates on Recurrent-Llama-T32. Under the same $T=32$ diagnostic protocol, all $60/60$ Recurrent-Llama examples across HotpotQA and GSM8K exhibit a positive routing--representation gap, despite substantial differences in absolute convergence speed and within-routing ordering. Full threshold-sensitivity and cross-backbone results are reported in Appendix~\ref{app:robustness}.

\subsection{Causal Validation of Support-Only Reuse}
\label{sec:causal_validation}

We next test which parts of recurrent computation can be safely reused once routing has stabilized.
Truncating recurrence at the discovery depth worsens teacher-forced gold-answer loss, indicating that representation refinement remains useful after global routing has largely stabilized.
Among methods that preserve recurrent depth, \texttt{WISE} better preserves Full-model behavior than early-static controls, supporting the importance of recurrent working-set discovery.
We further compare against \textsc{Freeze-A@12}, which uses exactly the same support as \texttt{WISE} but also freezes the within-support attention distribution at the discovery endpoint.
This stronger intervention is more disruptive than support-only reuse, indicating that the reusable structure is the routing support rather than the full attention distribution.
Together, these results support the central design of \texttt{WISE}: preserve late recurrent refinement while reusing only the routing structure that has already stabilized.


\paragraph{Cross-backbone replication.}
Recurrent-Llama-T32 yields a more nuanced behavioral replication: the \texttt{WISE}--\textsc{SizeMatched} F1 differences are $+0.0083$ on HotpotQA and $-0.0043$ on 2WikiMultiHopQA, indicating broadly comparable downstream quality across the two methods.
At the same time, \texttt{WISE} retains approximately $97\%$ of future Full-attention mass and changes fewer answers on both benchmarks.
As analyzed in Appendix~\ref{app:early_routing_predictiveness}, early static routing is substantially more predictive on this backbone, suggesting that delayed discovery is most beneficial when routing continues to reorganize during early recurrence.

\begin{table}[t]
\centering
\small
\caption{\textbf{Controlled interventions validate support-only reuse.}
$\Delta$Ans denotes answer changes relative to the matched Full run.
\textsc{Truncate@12} stops recurrence at the discovery depth; \textsc{SizeMatched} exactly matches the support cardinality of \texttt{WISE}; and \textsc{Freeze-A@12} uses exactly the same support as \texttt{WISE} but freezes its step-12 within-support attention distribution.}
\label{tab:causal_controls}
\setlength{\tabcolsep}{4.5pt}
\begin{tabular}{lcccccccc}
\toprule
& \multicolumn{4}{c}{\textbf{HotpotQA}}
& \multicolumn{4}{c}{\textbf{2WikiMultiHopQA}} \\
\cmidrule(lr){2-5}
\cmidrule(lr){6-9}
\textbf{Method}
& \textbf{F1} & $\boldsymbol{\Delta}$\textbf{Ans} & \textbf{Density} & \textbf{Future}
& \textbf{F1} & $\boldsymbol{\Delta}$\textbf{Ans} & \textbf{Density} & \textbf{Future} \\
\midrule
Full
& 0.2263 & --     & 100.0\% & 100.0\%
& 0.2516 & --     & 100.0\% & 100.0\% \\
Truncate@12
& 0.2141 & 59/100 & --     & --
& 0.2091 & 43/100 & --     & -- \\
Static-95
& 0.1467 & 87/100 & 30.0\% & 88.0\%
& 0.1548 & 74/100 & 34.6\% & 88.4\% \\
MassMatched
& 0.1509 & 78/100 & 36.8\% & 92.4\%
& 0.2055 & 66/100 & 42.2\% & 92.7\% \\
SizeMatched
& 0.1836 & 59/100 & 45.0\% & 94.7\%
& 0.2446 & 54/100 & 49.6\% & 94.9\% \\
Freeze-A@12
& 0.2164 & 55/100 & 45.0\% & 96.7\%
& 0.2001 & 48/100 & 49.6\% & 96.9\% \\
\textbf{\texttt{WISE}}
& \textbf{0.2317} & \textbf{24/100} & \textbf{45.0\%} & \textbf{96.7\%}
& \textbf{0.2497} & \textbf{16/100} & \textbf{49.6\%} & \textbf{96.9\%} \\
\bottomrule
\end{tabular}
\end{table}


\subsection{Practical GPU Efficiency}
\label{sec:gpu_efficiency}

Native FlashAttention is highly optimized for regular dense tiled execution, but it does not directly exploit the input-dependent block-sparse schedules discovered by \texttt{WISE}: applying the support only as a mask would preserve semantics while still traversing masked tiles. We therefore implement an exact-$B=32$ Triton kernel \citep{tillet2019triton} for the Huginn attention geometry on NVIDIA Ampere GPUs, using a reusable GPU-resident sparse schedule to skip inactive key blocks while fusing query--key scoring, causal masking, online softmax, and value aggregation. The final implementation further specializes arithmetic for the native head dimension and tunes the execution configuration, without changing \texttt{WISE} support or attention semantics; Appendix~\ref{app:experimental_details} provides implementation, correctness, and timing details. As shown in Table~\ref{tab:gpu_efficiency}, this converts the increasingly sparse routing structure into substantial wall-clock gains, reaching a $1.76\times$ setup-inclusive late-reuse speedup over native FlashAttention at 4K; the complete $T=32$ attention trajectory remains $1.36\times$ faster after including unrestricted discovery, and a matched exact-$B=32$ backend comparison yields a $2.50\times$ speedup from reducing active routing support alone. These results show that \texttt{WISE}'s structured sparsity is practically exploitable, not merely a reduction in theoretical attention work. Our implementation is nevertheless a workload-specialized sparse kernel rather than a fully co-designed sparse counterpart to FlashAttention, leaving tighter integration with architecture-specific scheduling and memory-movement pipelines as a complementary direction for further gains.

\begin{table}[t]
\centering
\small
\caption{
\textbf{\texttt{WISE} translates structured block sparsity into practical GPU acceleration at 4K context.}
Latency is measured on the canonical $N=30$ Huginn systems cohort using an NVIDIA RTX A6000, with values reported as medians over 10 balanced timing passes.
Native FlashAttention provides the optimized dense baseline.
The late-reuse comparison includes one-time sparse-schedule construction, while the matched exact-$B=32$ comparison uses the same optimized backend for Full-support and \texttt{WISE}-support execution to isolate the benefit of reducing active routing support.
}
\label{tab:gpu_efficiency}
\setlength{\tabcolsep}{4.8pt}
\begin{tabular}{llccc}
\toprule
\textbf{Workload}
& \textbf{Comparison}
& \textbf{Full}
& \textbf{\texttt{WISE}}
& \textbf{Speedup} \\
\midrule
Late reuse (20 steps)
& Native FA
& 161.382 ms
& 91.820 ms
& \textbf{1.758$\times$} \\
Full $T=32$ attention trajectory
& Native FA
& 258.787 ms
& 190.973 ms
& \textbf{1.355$\times$} \\
Late reuse (20 steps)
& Matched exact $B=32$
& 231.130 ms
& 92.317 ms
& \textbf{2.504$\times$} \\
\bottomrule
\end{tabular}
\end{table}

\subsection{Quality--Efficiency Scaling with Context Length}
\label{sec:context_scaling}

Figure~\ref{fig:quality_efficiency_scaling} summarizes how the quality--efficiency tradeoff evolves with context length. As context grows from $512$ to $4$K, working-set density decreases from roughly $57\%$ to $37\%$ while retaining about $96$--$98\%$ of future Full-attention mass. This increasing sparsity translates into progressively larger practical acceleration: setup-inclusive late-reuse speedup over native FlashAttention increases from $1.15\times$ at $512$ to $1.32\times$ at $1$K, $1.61\times$ at $2$K, and $1.76\times$ at $4$K. At $4$K, the complete $T=32$ attention trajectory remains $1.36\times$ faster even after including the unrestricted discovery phase. Matched quality comparisons show no detected average F1 loss through $2$K, while a $3.3$-point reduction emerges at $4$K. Thus, longer contexts expose an increasingly favorable efficiency opportunity while preserving downstream quality through moderate context lengths, with a measurable quality cost appearing only at the longest evaluated context.

\begin{figure*}[t]
\centering

\begin{minipage}[c]{0.32\textwidth}
\centering
\includegraphics[
    width=\linewidth,
    keepaspectratio
]{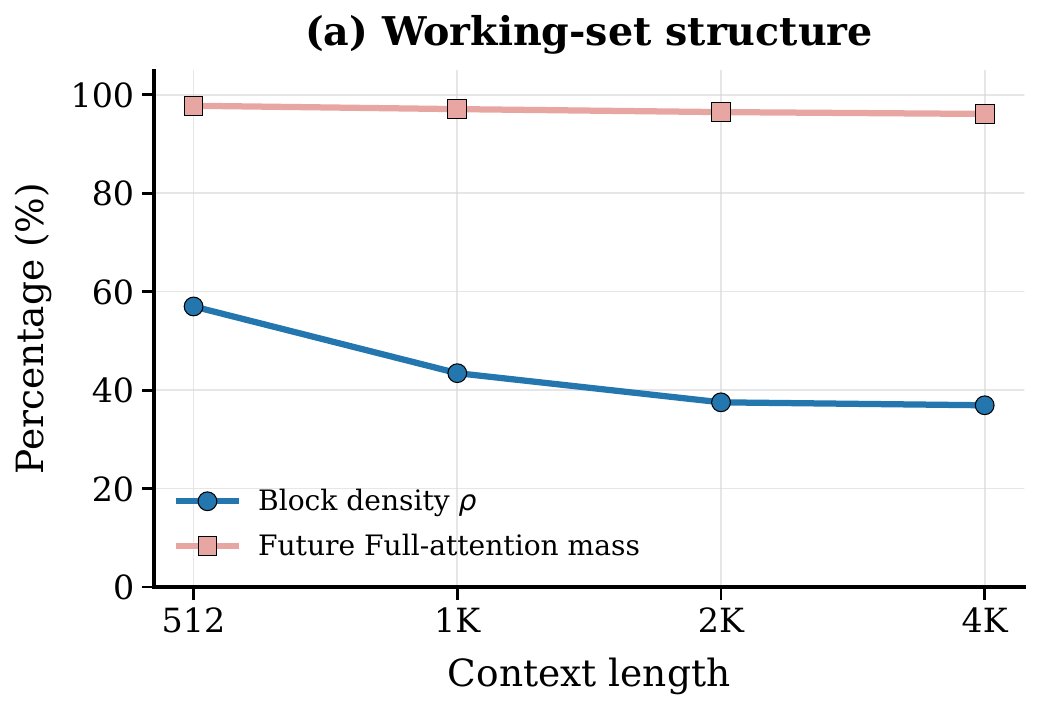}
\phantomsubcaption
\label{fig:scaling_structure}
\end{minipage}
\hfill
\begin{minipage}[c]{0.32\textwidth}
\centering
\includegraphics[
    width=\linewidth,
    keepaspectratio
]{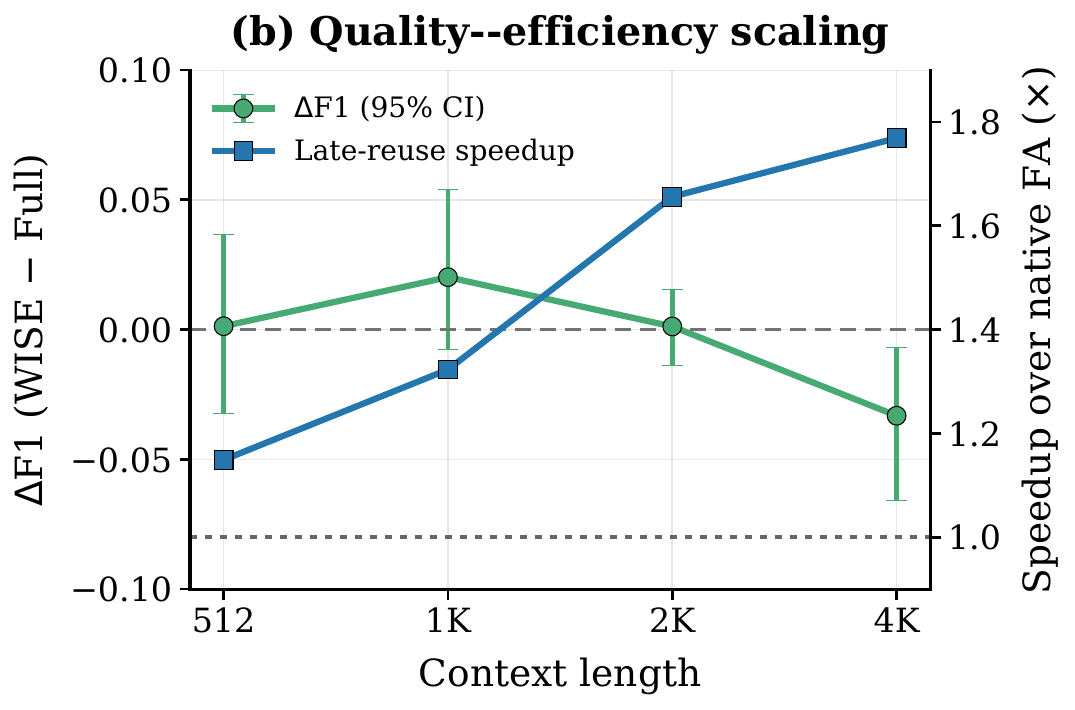}
\phantomsubcaption
\label{fig:scaling_quality_efficiency}
\end{minipage}
\hfill
\begin{minipage}[c]{0.32\textwidth}
\centering
\includegraphics[
    width=\linewidth,
    keepaspectratio
]{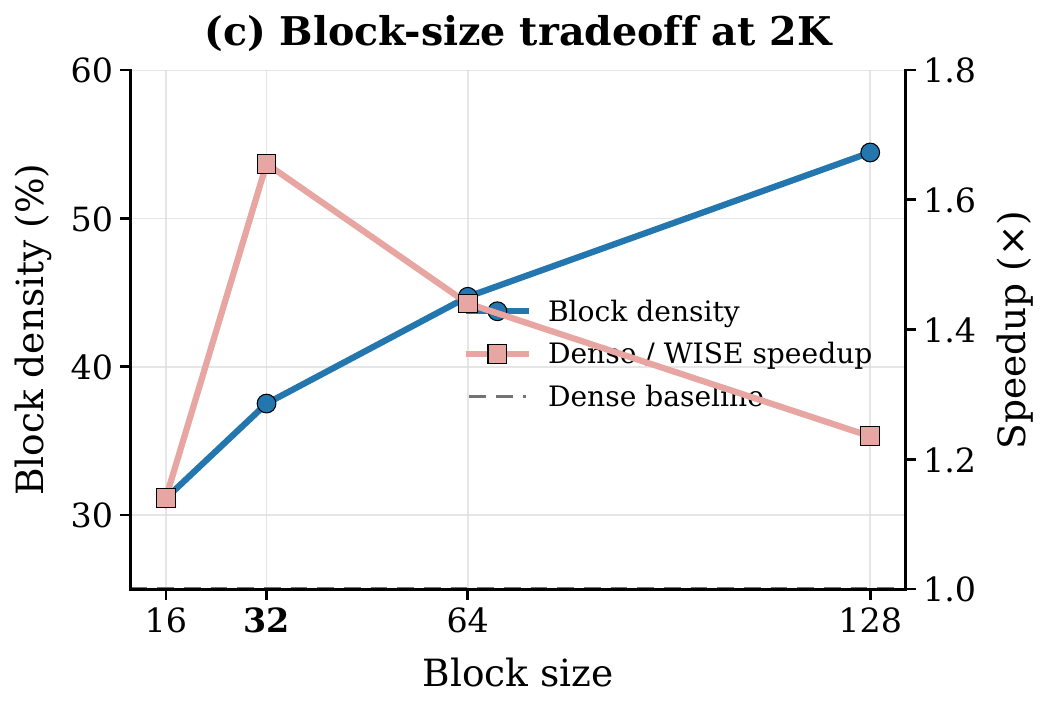}
\phantomsubcaption
\label{fig:block_size_tradeoff}
\end{minipage}

\caption{
\textbf{\texttt{WISE} exposes a context-dependent quality--efficiency frontier and a sparsity--execution tradeoff.}
\textbf{(a)} Working-set density decreases with context length while retaining most future Full-attention mass.
\textbf{(b)} Late-reuse speedup increases with context length, while matched HotpotQA quality shows no detected average F1 loss through $2$K and degrades at $4$K; quality and systems use fixed $N=100$ and $N=30$ cohorts, respectively.
\textbf{(c)} Although $B=16$ is sparsest, $B=32$ achieves the highest setup-inclusive speedup on the matched $2$K workload, forming the systems-facing knee.
Full protocols and block-size results are in Appendices~\ref{app:experimental_details} and~\ref{app:block_size}.
}
\label{fig:quality_efficiency_scaling}

\end{figure*}


\textbf{Block-size tradeoff.}
Under equal implementation-tuning budgets, $B=16$ yields the sparsest support, while $B=32$ achieves the highest setup-inclusive speedup at both 2K and 4K; coarser blocks are progressively denser and slower.
No tested granularity consistently dominates downstream quality across both benchmarks, so we use $B=32$ as the systems-facing knee between sparsity and executable regularity; complete results are in Appendix ~\ref{app:block_size}.

\section{Conclusion}

We identify a separation in recurrent-depth language-model inference: attention routing stabilizes substantially earlier than representation refinement.
Controlled interventions show that routing can become reusable before recurrent refinement is complete, motivating \texttt{WISE}, which preserves recurrent depth while reusing only the sparse routing support discovered during early global attention.
Across two recurrent backbones, our results support the routing--representation separation while revealing that delayed discovery is most useful when early routing remains predictive of later attention only after several recurrent steps. On the primary backbone, \texttt{WISE} converts this reusable routing structure into practical attention acceleration, exposing a context-dependent quality--efficiency tradeoff. Because \texttt{WISE}'s block-sparse routing is complementary to FlashAttention-style kernel optimization, tighter kernel co-design may unlock further efficiency beyond the specialized sparse implementation studied here.









\bibliographystyle{iclr2027_conference}
\bibliography{main}

@inproceedings{ICLR2024_98ed250b,
 author = {Dao, Tri},
 booktitle = {International Conference on Learning Representations},
 editor = {B. Kim and Y. Yue and S. Chaudhuri and K. Fragkiadaki and M. Khan and Y. Sun},
 pages = {35549--35562},
 title = {FlashAttention-2: Faster Attention with Better Parallelism and Work Partitioning},
 url = {https://proceedings.iclr.cc/paper_files/paper/2024/file/98ed250b203d1ac6b24bbcf263e3d4a7-Paper-Conference.pdf},
 volume = {2024},
 year = {2024}
}

@inproceedings{NEURIPS2024_5dfbe6f5,
 author = {Jiang, Huiqiang and Li, Yucheng and Zhang, Chengruidong and Wu, Qianhui and Luo, Xufang and Ahn, Surin and Han, Zhenhua and Abdi, Amir H. and Li, Dongsheng and Lin, Chin-Yew and Yang, Yuqing and Qiu, Lili},
 booktitle = {Advances in Neural Information Processing Systems},
 doi = {10.52202/079017-1663},
 editor = {A. Globerson and L. Mackey and D. Belgrave and A. Fan and U. Paquet and J. Tomczak and C. Zhang},
 pages = {52481--52515},
 publisher = {Curran Associates, Inc.},
 title = {MInference 1.0: Accelerating Pre-filling for Long-Context LLMs via Dynamic Sparse Attention},
 url = {https://proceedings.neurips.cc/paper_files/paper/2024/file/5dfbe6f5671e82c76841ba687a8a9ecb-Paper-Conference.pdf},
 volume = {37},
 year = {2024}
}

@inproceedings{NEURIPS2023_6ceefa7b,
 author = {Zhang, Zhenyu and Sheng, Ying and Zhou, Tianyi and Chen, Tianlong and Zheng, Lianmin and Cai, Ruisi and Song, Zhao and Tian, Yuandong and R\'{e}, Christopher and Barrett, Clark and Wang, Zhangyang "Atlas" and Chen, Beidi},
 booktitle = {Advances in Neural Information Processing Systems},
 doi = {10.52202/075280-1506},
 editor = {A. Oh and T. Naumann and A. Globerson and K. Saenko and M. Hardt and S. Levine},
 pages = {34661--34710},
 publisher = {Curran Associates, Inc.},
 title = {H2O: Heavy-Hitter Oracle for Efficient Generative Inference of Large Language Models},
 url = {https://proceedings.neurips.cc/paper_files/paper/2023/file/6ceefa7b15572587b78ecfcebb2827f8-Paper-Conference.pdf},
 volume = {36},
 year = {2023}
}

@inproceedings{NEURIPS2022_8bb0d291,
 author = {Kojima, Takeshi and Gu, Shixiang (Shane) and Reid, Machel and Matsuo, Yutaka and Iwasawa, Yusuke},
 booktitle = {Advances in Neural Information Processing Systems},
 doi = {10.52202/068431-1613},
 editor = {S. Koyejo and S. Mohamed and A. Agarwal and D. Belgrave and K. Cho and A. Oh},
 pages = {22199--22213},
 publisher = {Curran Associates, Inc.},
 title = {Large Language Models are Zero-Shot Reasoners},
 url = {https://proceedings.neurips.cc/paper_files/paper/2022/file/8bb0d291acd4acf06ef112099c16f326-Paper-Conference.pdf},
 volume = {35},
 year = {2022}
}

@inproceedings{rodkin-etal-2026-beyond,
    title = "Beyond Memorization: Extending Reasoning Depth with Recurrence, Memory and Test-Time Compute Scaling",
    author = "Rodkin, Ivan  and
      Orel, Daniil  and
      Smirnov, Konstantin  and
      Bolatov, Arman  and
      Elbouardi, Bilal  and
      Hassan, Besher  and
      Kuratov, Yuri  and
      Bulatov, Aydar  and
      Nakov, Preslav  and
      Baldwin, Timothy  and
      Shelmanov, Artem  and
      Burtsev, Mikhail",
    editor = "Liakata, Maria  and
      Moreira, Viviane P.  and
      Zhang, Jiajun  and
      Jurgens, David",
    booktitle = "Findings of the {A}ssociation for {C}omputational {L}inguistics: {ACL} 2026",
    month = jul,
    year = "2026",
    address = "San Diego, California, United States",
    publisher = "Association for Computational Linguistics",
    url = "https://aclanthology.org/2026.findings-acl.2103/",
    doi = "10.18653/v1/2026.findings-acl.2103",
    pages = "42385--42404",
    ISBN = "979-8-89176-395-1"
}

@inproceedings{elhoushi-etal-2024-layerskip,
    title = "{L}ayer{S}kip: Enabling Early Exit Inference and Self-Speculative Decoding",
    author = "Elhoushi, Mostafa  and
      Shrivastava, Akshat  and
      Liskovich, Diana  and
      Hosmer, Basil  and
      Wasti, Bram  and
      Lai, Liangzhen  and
      Mahmoud, Anas  and
      Acun, Bilge  and
      Agarwal, Saurabh  and
      Roman, Ahmed  and
      Aly, Ahmed  and
      Chen, Beidi  and
      Wu, Carole-Jean",
    editor = "Ku, Lun-Wei  and
      Martins, Andre  and
      Srikumar, Vivek",
    booktitle = "Proceedings of the 62nd Annual Meeting of the Association for Computational Linguistics (Volume 1: Long Papers)",
    month = aug,
    year = "2024",
    address = "Bangkok, Thailand",
    publisher = "Association for Computational Linguistics",
    url = "https://aclanthology.org/2024.acl-long.681/",
    doi = "10.18653/v1/2024.acl-long.681",
    pages = "12622--12642"
}

@misc{cobbe2021trainingverifierssolvemath,
      title={Training Verifiers to Solve Math Word Problems}, 
      author={Karl Cobbe and Vineet Kosaraju and Mohammad Bavarian and Mark Chen and Heewoo Jun and Lukasz Kaiser and Matthias Plappert and Jerry Tworek and Jacob Hilton and Reiichiro Nakano and Christopher Hesse and John Schulman},
      year={2021},
      eprint={2110.14168},
      archivePrefix={arXiv},
      primaryClass={cs.LG},
      url={https://arxiv.org/abs/2110.14168}, 
}

@inproceedings{NEURIPS2025_3b01972c,
 author = {Geiping, Jonas and McLeish, Sean and Jain, Neel and Kirchenbauer, John and Singh, Siddharth and Bartoldson, Brian and Kailkhura, Bhavya and Bhatele, Abhinav and Goldstein, Tom},
 booktitle = {Advances in Neural Information Processing Systems},
 doi = {10.52202/085713-1380},
 editor = {D. Belgrave and C. Zhang and H. Lin and R. Pascanu and P. Koniusz and M. Ghassemi and N. Chen},
 pages = {41340--41391},
 publisher = {Curran Associates, Inc.},
 title = {Scaling up Test-Time Compute with Latent Reasoning: A Recurrent Depth Approach},
 url = {https://proceedings.neurips.cc/paper_files/paper/2025/file/3b01972cf31e6fa0fe29e4b8b5c2a0a1-Paper-Conference.pdf},
 volume = {38, Main Conference},
 year = {2025}
}

@inproceedings{xu-etal-2025-refreshkv,
    title = "{R}efresh{KV}: Updating Small {KV} Cache During Long-form Generation",
    author = "Xu, Fangyuan  and
      Goyal, Tanya  and
      Choi, Eunsol",
    editor = "Che, Wanxiang  and
      Nabende, Joyce  and
      Shutova, Ekaterina  and
      Pilehvar, Mohammad Taher",
    booktitle = "Proceedings of the 63rd Annual Meeting of the Association for Computational Linguistics (Volume 1: Long Papers)",
    month = jul,
    year = "2025",
    address = "Vienna, Austria",
    publisher = "Association for Computational Linguistics",
    url = "https://aclanthology.org/2025.acl-long.1211/",
    doi = "10.18653/v1/2025.acl-long.1211",
    pages = "24878--24893",
    ISBN = "979-8-89176-251-0"
}

@article{tang2024quest,
  title={Quest: Query-aware sparsity for efficient long-context llm inference},
  author={Tang, Jiaming and Zhao, Yilong and Zhu, Kan and Xiao, Guangxuan and Kasikci, Baris and Han, Song},
  journal={arXiv preprint arXiv:2406.10774},
  year={2024}
}

@inproceedings{yang2018hotpotqa,
  title={HotpotQA: A dataset for diverse, explainable multi-hop question answering},
  author={Yang, Zhilin and Qi, Peng and Zhang, Saizheng and Bengio, Yoshua and Cohen, William and Salakhutdinov, Ruslan and Manning, Christopher D},
  booktitle={Proceedings of the 2018 conference on empirical methods in natural language processing},
  pages={2369--2380},
  year={2018}
}

@inproceedings{tillet2019triton,
  title={Triton: an intermediate language and compiler for tiled neural network computations},
  author={Tillet, Philippe and Kung, Hsiang-Tsung and Cox, David},
  booktitle={Proceedings of the 3rd ACM SIGPLAN International Workshop on Machine Learning and Programming Languages},
  pages={10--19},
  year={2019}
}

@article{mcleish2025teaching,
  title={Teaching pretrained language models to think deeper with retrofitted recurrence},
  author={McLeish, Sean and Li, Ang and Kirchenbauer, John and Kalra, Dayal Singh and Bartoldson, Brian R and Kailkhura, Bhavya and Schwarzschild, Avi and Geiping, Jonas and Goldstein, Tom and Goldblum, Micah},
  journal={arXiv preprint arXiv:2511.07384},
  year={2025}
}

@inproceedings{ho2020constructing,
  title={Constructing a multi-hop qa dataset for comprehensive evaluation of reasoning steps},
  author={Ho, Xanh and Nguyen, Anh-Khoa Duong and Sugawara, Saku and Aizawa, Akiko},
  booktitle={Proceedings of the 28th International Conference on Computational Linguistics},
  pages={6609--6625},
  year={2020}
}

@article{deshmukh2025kascade,
  title={Kascade: A practical sparse attention method for long-context llm inference},
  author={Deshmukh, Dhruv and Goyal, Saurabh and Kwatra, Nipun and Ramjee, Ramachandran},
  journal={arXiv preprint arXiv:2512.16391},
  year={2025}
}

@article{dao2022flashattention,
  title={Flashattention: Fast and memory-efficient exact attention with io-awareness},
  author={Dao, Tri and Fu, Dan and Ermon, Stefano and Rudra, Atri and R{\'e}, Christopher},
  journal={Advances in neural information processing systems},
  volume={35},
  pages={16344--16359},
  year={2022}
}

@article{xiao2019sharing,
  title={Sharing attention weights for fast transformer},
  author={Xiao, Tong and Li, Yinqiao and Zhu, Jingbo and Yu, Zhengtao and Liu, Tongran},
  journal={arXiv preprint arXiv:1906.11024},
  year={2019}
}

@article{zhou2020bert,
  title={Bert loses patience: Fast and robust inference with early exit},
  author={Zhou, Wangchunshu and Xu, Canwen and Ge, Tao and McAuley, Julian and Xu, Ke and Wei, Furu},
  journal={Advances in Neural Information Processing Systems},
  volume={33},
  pages={18330--18341},
  year={2020}
}

@article{banino2021pondernet,
  title={Pondernet: Learning to ponder},
  author={Banino, Andrea and Balaguer, Jan and Blundell, Charles},
  journal={arXiv preprint arXiv:2107.05407},
  year={2021}
}

@inproceedings{xin2020deebert,
  title={DeeBERT: Dynamic early exiting for accelerating BERT inference},
  author={Xin, Ji and Tang, Raphael and Lee, Jaejun and Yu, Yaoliang and Lin, Jimmy},
  booktitle={Proceedings of the 58th annual meeting of the association for computational linguistics},
  pages={2246--2251},
  year={2020}
}

@article{elbayad2019depth,
  title={Depth-adaptive transformer},
  author={Elbayad, Maha and Gu, Jiatao and Grave, Edouard and Auli, Michael},
  journal={arXiv preprint arXiv:1910.10073},
  year={2019}
}

@article{kohli2026loop,
  title={Loop, think, \& generalize: Implicit reasoning in recurrent-depth transformers},
  author={Kohli, Harsh and Parthasarathy, Srinivasan and Sun, Huan and Yao, Yuekun},
  journal={arXiv preprint arXiv:2604.07822},
  year={2026}
}

@inproceedings{zhang2026modr,
  title={Modr: Mixture-of-depth-recurrent transformers for test-time reasoning},
  author={Zhang, Xiaojing and Wu, Haifeng and He, Gang and Shen, Jiyang and Lyu, Bochen and Zhu, Zhanxing},
  booktitle={International Conference on Learning Representations},
  volume={2026},
  pages={85952--85975},
  year={2026}
}

@inproceedings{saunshi2025reasoning,
  title={Reasoning with latent thoughts: On the power of looped transformers},
  author={Saunshi, Nikunj and Dikkala, Nishanth and Li, Zhiyuan and Kumar, Sanjiv and J Reddi, Sashank},
  booktitle={International Conference on Learning Representations},
  volume={2025},
  pages={14855--14881},
  year={2025}
}

@article{li2024snapkv,
  title={Snapkv: Llm knows what you are looking for before generation},
  author={Li, Yuhong and Huang, Yingbing and Yang, Bowen and Venkitesh, Bharat and Locatelli, Acyr and Ye, Hanchen and Cai, Tianle and Lewis, Patrick and Chen, Deming},
  journal={Advances in Neural Information Processing Systems},
  volume={37},
  pages={22947--22970},
  year={2024}
}

@article{raposo2024mixture,
  title={Mixture-of-depths: Dynamically allocating compute in transformer-based language models},
  author={Raposo, David and Ritter, Sam and Richards, Blake and Lillicrap, Timothy and Humphreys, Peter Conway and Santoro, Adam},
  journal={arXiv preprint arXiv:2404.02258},
  year={2024}
}

@article{hao2024training,
  title={Training large language models to reason in a continuous latent space},
  author={Hao, Shibo and Sukhbaatar, Sainbayar and Su, DiJia and Li, Xian and Hu, Zhiting and Weston, Jason and Tian, Yuandong},
  journal={arXiv preprint arXiv:2412.06769},
  year={2024}
}

@article{graves2016adaptive,
  title={Adaptive computation time for recurrent neural networks},
  author={Graves, Alex},
  journal={arXiv preprint arXiv:1603.08983},
  year={2016}
}

@inproceedings{giannou2023looped,
  title={Looped transformers as programmable computers},
  author={Giannou, Angeliki and Rajput, Shashank and Sohn, Jy-yong and Lee, Kangwook and Lee, Jason D and Papailiopoulos, Dimitris},
  booktitle={International Conference on Machine Learning},
  pages={11398--11442},
  year={2023},
  organization={PMLR}
}

@article{dehghani2018universal,
  title={Universal transformers},
  author={Dehghani, Mostafa and Gouws, Stephan and Vinyals, Oriol and Uszkoreit, Jakob and Kaiser, {\L}ukasz},
  journal={arXiv preprint arXiv:1807.03819},
  year={2018}
}

@inproceedings{yang2024looped,
  title={Looped transformers are better at learning learning algorithms},
  author={Yang, Liu and Lee, Kangwook and Nowak, Robert and Papailiopoulos, Dimitris},
  booktitle={International conference on learning representations},
  volume={2024},
  pages={42195--42214},
  year={2024}
}

\newpage
\appendix

\appendix


\section{Proofs for the Theoretical Analysis}
\label{app:theory_proofs}

\subsection{Proof of Theorem~\ref{thm:support_identification}}
\label{app:proof_support_identification}

\begin{proof}
Fix a recurrent block, attention head, and query block $q$, and let
$\boldsymbol{r}_{q}^{(t)}=[r_{qb}^{(t)}]_b$
denote the block-level attention-mass distribution defined in
Equation~\ref{eq:block_attention_mass}.
Let $\boldsymbol{r}_{q}^{\star}$ denote the limiting block-mass distribution induced by
$\boldsymbol{H}^{\star}$, and write its entries in descending order as
$r_{q,(1)}^{\star}\geq\cdots\geq r_{q,(N_B)}^{\star}$,
where $N_B$ is the number of admissible key blocks.
Let
$k_q^{\star}=|\mathcal{B}_q^{\star}|$
be the smallest integer such that
\begin{equation}
    \sum_{s=1}^{k_q^{\star}}
    r_{q,(s)}^{\star}
    \geq \eta .
\label{eq:app_kstar}
\end{equation}
For the cumulative-mass support to be locally identifiable, we assume a strict ranking boundary,
$r_{q,(k_q^{\star})}^{\star}
>
r_{q,(k_q^{\star}+1)}^{\star}$,
and a strict cumulative-mass boundary,
\[
    \sum_{s=1}^{k_q^{\star}-1}
    r_{q,(s)}^{\star}
    <
    \eta
    <
    \sum_{s=1}^{k_q^{\star}}
    r_{q,(s)}^{\star}.
\]
Define
\[
    \delta_q^{\mathrm{rank}}
    =
    \frac{
        r_{q,(k_q^{\star})}^{\star}
        -
        r_{q,(k_q^{\star}+1)}^{\star}
    }{2},
    \qquad
    \delta_q^{-}
    =
    \eta-
    \sum_{s=1}^{k_q^{\star}-1}
    r_{q,(s)}^{\star},
    \qquad
    \delta_q^{+}
    =
    \sum_{s=1}^{k_q^{\star}}
    r_{q,(s)}^{\star}
    -\eta ,
\]
and let
\begin{equation}
    \Delta_q
    =
    \min\left\{
        \delta_q^{\mathrm{rank}},
        \frac{\delta_q^{-}}{k_q^{\star}-1},
        \frac{\delta_q^{+}}{k_q^{\star}}
    \right\},
\label{eq:effective_support_margin}
\end{equation}
where the second term is interpreted as $+\infty$ when $k_q^{\star}=1$.
By the nondegeneracy assumptions, $\Delta_q>0$.

We first show that
$\|\boldsymbol{r}_q^{(t)}-\boldsymbol{r}_q^{\star}\|_{\infty}<\Delta_q$
is sufficient for exact support identification.
For any
$b\in\mathcal{B}_q^{\star}$
and
$c\notin\mathcal{B}_q^{\star}$,
\begin{equation}
    r_{qb}^{(t)}-r_{qc}^{(t)}
    \geq
    r_{qb}^{\star}-r_{qc}^{\star}
    -
    2
    \left\|
        \boldsymbol{r}_q^{(t)}
        -
        \boldsymbol{r}_q^{\star}
    \right\|_{\infty}
    >
    0,
\label{eq:app_ranking_preservation}
\end{equation}
so every block in $\mathcal{B}_q^{\star}$ remains ranked above every block outside it.
The cumulative mass of any $k_q^{\star}-1$ blocks is bounded above by
\[
    \sum_{s=1}^{k_q^{\star}-1}
    r_{q,(s)}^{\star}
    +
    (k_q^{\star}-1)
    \left\|
        \boldsymbol{r}_q^{(t)}
        -
        \boldsymbol{r}_q^{\star}
    \right\|_{\infty}
    <
    \eta ,
\]
while the mass of the $k_q^{\star}$ blocks in $\mathcal{B}_q^{\star}$ is bounded below by
\[
    \sum_{s=1}^{k_q^{\star}}
    r_{q,(s)}^{\star}
    -
    k_q^{\star}
    \left\|
        \boldsymbol{r}_q^{(t)}
        -
        \boldsymbol{r}_q^{\star}
    \right\|_{\infty}
    >
    \eta .
\]
Hence $k_q^{\star}$ remains the smallest number of blocks whose cumulative mass reaches $\eta$, and their membership is unchanged.
Therefore,
$\mathcal{B}_q^{(t,\eta)}=\mathcal{B}_q^{\star}$
whenever
$\|\boldsymbol{r}_q^{(t)}-\boldsymbol{r}_q^{\star}\|_{\infty}<\Delta_q$.

By the local $L_q$-Lipschitz assumption and geometric convergence of the recurrent representations, for all sufficiently large $t$,
\begin{equation}
    \left\|
        \boldsymbol{r}_q^{(t)}
        -
        \boldsymbol{r}_q^{\star}
    \right\|_{\infty}
    \leq
    L_q
    \left\|
        \boldsymbol{H}^{(t)}
        -
        \boldsymbol{H}^{\star}
    \right\|
    \leq
    L_q C\beta^t .
\label{eq:app_attention_convergence}
\end{equation}
Since $0<\beta<1$, there exists a finite $T_{S,q}$ such that
$L_qC\beta^t<\Delta_q$
for every $t\geq T_{S,q}$.
Thus,
$\mathcal{B}_q^{(t,\eta)}=\mathcal{B}_q^{\star}$
for every $t\geq T_{S,q}$.
If the Lipschitz bound is valid from recurrent step $T_{0,q}$ onward, one valid choice is
\begin{equation}
    T_{S,q}
    =
    \max\left\{
        T_{0,q},
        \left\lfloor
        \frac{\log(L_qC/\Delta_q)}
             {-\log\beta}
        \right\rfloor
        +1
    \right\},
\label{eq:app_exact_identification}
\end{equation}
with the second term replaced by $0$ whenever $L_qC<\Delta_q$.
Consequently,
\[
    T_{S,q}
    =
    O\!\left(
        \frac{\log(L_qC/\Delta_q)}
             {-\log\beta}
    \right),
\]
up to the finite entry time into the local Lipschitz neighborhood.
For a finite collection of recurrent blocks, attention heads, and query blocks, taking
$T_S=\max_q T_{S,q}$
over all such routing units yields a finite recurrent depth after which all nondegenerate block supports are simultaneously identified.
\end{proof}

\subsection{Proof of Proposition~\ref{prop:necessary_support}}
\label{app:proof_necessary_support}

\begin{proof}
For query block $q$, define the routing region
\[
    \mathcal{R}_q(\mathcal{B}_q^{\star})
    =
    \left\{
        \boldsymbol{H}:
        \mathcal{B}_q^{(\eta)}(\boldsymbol{H})
        =
        \mathcal{B}_q^{\star}
    \right\}.
\]
The proof of Theorem~\ref{thm:support_identification} establishes that any representation satisfying
\[
    \left\|
        \boldsymbol{r}_q(\boldsymbol{H})
        -
        \boldsymbol{r}_q^{\star}
    \right\|_{\infty}
    <
    \Delta_q
\]
induces cumulative-mass support $\mathcal{B}_q^{\star}$.
By local $L_q$-Lipschitz continuity,
\[
    \left\|
        \boldsymbol{r}_q(\boldsymbol{H})
        -
        \boldsymbol{r}_q^{\star}
    \right\|_{\infty}
    \leq
    L_q
    \left\|
        \boldsymbol{H}
        -
        \boldsymbol{H}^{\star}
    \right\|,
\]
and therefore
\begin{equation}
    \mathbb{B}\!\left(
        \boldsymbol{H}^{\star},
        \frac{\Delta_q}{L_q}
    \right)
    \subseteq
    \mathcal{R}_q(\mathcal{B}_q^{\star}).
\label{eq:app_stable_routing_region}
\end{equation}
If
$\boldsymbol{H}^{(t)}\rightarrow\boldsymbol{H}^{\star}$,
then there exists a finite $T$ such that
\[
    \boldsymbol{H}^{(t)}
    \in
    \mathbb{B}\!\left(
        \boldsymbol{H}^{\star},
        \frac{\Delta_q}{L_q}
    \right)
\]
for every $t\geq T$, and hence
$\mathcal{B}_q^{(t,\eta)}=\mathcal{B}_q^{\star}$
thereafter.
Eventual support stabilization therefore follows from convergence to a nondegenerate limiting representation.

The converse does not hold.
Since $\Delta_q/L_q>0$, the routing region contains a nontrivial neighborhood of
$\boldsymbol{H}^{\star}$ and hence distinct representations
$\boldsymbol{H}_1\neq\boldsymbol{H}_2$
that induce the same block support
$\mathcal{B}_q^{\star}$.
A recurrent trajectory may therefore continue to move within this region while preserving an identical routing support.
Thus, without additional constraints on the recurrent dynamics, support stabilization is insufficient to imply representation convergence.
\end{proof}

\section{Experimental Details}
\label{app:experimental_details}

\paragraph{Models, benchmarks, and evaluation cohorts.}
Our primary experiments use the publicly available Huginn recurrent-depth language model with recurrent depth $T=32$. We additionally evaluate Recurrent-Llama-T32, a recurrentized Llama-3.2 backbone trained with $32$ recurrent steps, to test whether the observed routing dynamics and behavioral effects generalize beyond the primary checkpoint. The Recurrent-Llama model contains six recurrent blocks and uses grouped-query attention with $32$ query heads and $8$ key--value heads. Because it is not instruction-tuned, we use a single fixed plain-completion prompt for all Recurrent-Llama conditions; prompting is held fixed across Full, \texttt{WISE}, and all static controls within each backbone. Model parameters, numerical precision, decoding settings, and recurrent initialization are held fixed across matched inference conditions, and no model training or parameter update is performed. Our primary benchmark is the HotpotQA distractor validation split, evaluated with its original questions, answers, and benchmark-native contexts. Benchmark-level behavioral comparisons use fixed 100-example cohorts from HotpotQA and 2WikiMultiHopQA, while mechanism diagnostics use fixed 30-example cohorts from HotpotQA and GSM8K. No example is selected or removed according to Full or \texttt{WISE} performance, and all paired methods within a backbone receive identical inputs and recurrent initialization.

\paragraph{\texttt{WISE} configuration.}
All \texttt{WISE} experiments use the block-structured configuration defined in Section~\ref{sec:wise}. We set the recurrent depth to $T=32$, block size to $B=32$, discovery depth to $t_{\mathrm d}=12$, and block-mass threshold to $\eta=0.95$. For each recurrent block, attention head, and query block $q$, the reused working set is
\begin{equation}
    \mathcal{B}_q
    =
    \bigcup_{t=t_{\mathrm d}-3}^{t_{\mathrm d}}
    \mathcal{B}_q^{(t,\eta)},
\end{equation}
corresponding to the union of direct block-level $95\%$-mass supports from recurrent steps $9$--$12$. Support is constructed directly in block space rather than by selecting token-level supports and subsequently rounding them to blocks. For $t>t_{\mathrm d}$, $\mathcal{B}_q$ remains fixed while $\boldsymbol{H}^{(t)}$, $\boldsymbol{Q}^{(t)}$, $\boldsymbol{K}^{(t)}$, $\boldsymbol{V}^{(t)}$, and the within-support attention weights continue to be recomputed. The same $B=32$, $t_{\mathrm d}=12$, $\eta=0.95$, and steps-$9$--$12$ discovery window are used for Huginn and Recurrent-Llama-T32 without backbone-specific tuning.

\paragraph{Intervention controls.}
We compare \texttt{WISE} against three early-static controls that use the same block representation and the same intervention timing.
For each recurrent block, attention head, and query block $q$, let
$\boldsymbol{a}_q^{(t)}$ denote the causal block-attention masses at recurrent step $t$, and let
$\mathcal{B}_q^{(t,\rho)}$ denote the smallest set of highest-mass admissible key blocks whose cumulative mass under $\boldsymbol{a}_q^{(t)}$ is at least $\rho$.
The \textsc{Static-95} control freezes the direct $95\%$-mass support identified at the first recurrent step,
\begin{equation}
\mathcal{S}_q^{\mathrm{Static95}}
=
\mathcal{B}_q^{(1,0.95)}.
\end{equation}

To distinguish the effect of recurrent discovery from simply retaining more early attention mass, \textsc{MassMatched} uses the recurrence-1 ranking but matches the amount of recurrence-1 attention mass covered by the final \texttt{WISE} working set.
Writing
\begin{equation}
m_q^{\mathrm{WISE}}
=
\sum_{j\in\mathcal{W}_q}
a_{q,j}^{(1)},
\qquad
\mathcal{W}_q
=
\bigcup_{t=t_{\mathrm d}-3}^{t_{\mathrm d}}
\mathcal{B}_q^{(t,\eta)},
\end{equation}
the mass-matched control is
\begin{equation}
\mathcal{S}_q^{\mathrm{MassMatched}}
=
\mathcal{B}_q^{(1,m_q^{\mathrm{WISE}})}.
\end{equation}

Finally, \textsc{SizeMatched} controls exactly for support cardinality.
Let $K_q=|\mathcal{W}_q|$.
Using the recurrence-1 block scores, it retains the $K_q$ highest-scoring admissible blocks,
\begin{equation}
\mathcal{S}_q^{\mathrm{SizeMatched}}
=
\operatorname{TopK}
\left(
\boldsymbol{a}_q^{(1)}, K_q
\right).
\end{equation}
Thus, \textsc{MassMatched} matches the recurrence-1 attention mass retained by \texttt{WISE}, whereas \textsc{SizeMatched} matches its block cardinality exactly for every local routing unit.

We additionally evaluate two interventions that probe what remains necessary after the discovery phase.
\textsc{Freeze-A@12} uses exactly the same final working set $W_q$ as \texttt{WISE}, but also freezes the within-support attention distribution at the discovery endpoint.
Let $A^{(12)}$ denote the unrestricted post-softmax attention matrix at recurrent step 12.
Using the query- and key-block mappings $q(i)$ and $b(j)$ from Section~5.2, we define the restricted and renormalized distribution
\[
\bar{A}^{(12)}_{ij}
=
\frac{
A^{(12)}_{ij}\,
\mathbf{1}\!\left[b(j)\in W_{q(i)}\right]
}{
\sum_{k\in\mathcal{K}_i}
A^{(12)}_{ik}\,
\mathbf{1}\!\left[b(k)\in W_{q(i)}\right]
},
\]
where $\mathcal{K}_i$ denotes the keys admissible under the native attention mask.
For all $t>t_d$, \textsc{Freeze-A@12} reuses this distribution,
\[
\widetilde{A}^{(t)}=\bar{A}^{(12)},
\qquad
\widetilde{O}^{(t)}=\bar{A}^{(12)}V^{(t)},
\]
while values, hidden representations, and the remaining recurrent computation continue to evolve.
Thus, \textsc{Freeze-A@12} and \texttt{WISE} use exactly the same routing support and differ only in whether within-support attention weights are recomputed after discovery.

\textsc{Truncate@12} instead follows the unrestricted Full trajectory through $t_d=12$ and then terminates the recurrent core without executing steps 13--32.
The model's native post-recurrent computation, including its exit path, final normalization, and output head, is otherwise left unchanged.

All intervention conditions are matched to the unrestricted Full trajectory through $t_d=12$.
For \textsc{Static-95}, \textsc{MassMatched}, \textsc{SizeMatched}, and \texttt{WISE}, recurrent computation continues through $T=32$ using the corresponding fixed support while queries, keys, values, hidden states, and within-support attention weights remain dynamic.
\textsc{Freeze-A@12} likewise continues through $T=32$, but additionally reuses the fixed distribution $\bar{A}^{(12)}$ defined above.
\textsc{Truncate@12} instead terminates recurrent computation immediately after step 12.
These interventions therefore separately test support composition, within-support reweighting, and the value of continued recurrent refinement after the discovery depth.


\paragraph{Evaluation metrics.}
We report standard normalized exact match (EM) and token-level F1 for downstream answer quality. Because the main comparisons are paired at the example level, we additionally report paired F1 differences with $95\%$ bootstrap confidence intervals and answer-change counts relative to the matched Full run. 
Controlled interventions additionally report paired changes in teacher-forced gold-answer adjusted loss as a continuous measure of behavioral perturbation, where a positive change denotes worse behavior relative to the reference condition.
To characterize routing structure, we report block density, defined as the fraction of admissible query--key blocks retained during reuse, and retained future attention mass, defined as the fraction of unrestricted Full-attention mass at $t>t_{\mathrm d}$ that falls inside the frozen support. We compute these metrics using the same definitions for both backbones.

\paragraph{Mechanism diagnostics.}
The convergence analysis uses fixed 30-example cohorts from HotpotQA and GSM8K and tracks diagnostic attention support, attention distributions, hidden representations, and pre-projection attention outputs across recurrent depth. For query position $i$, the diagnostic support is the causal Top-$\min(16,i+1)$ attended-key set, which is distinct from the direct $B=32$ cumulative-mass support deployed by \texttt{WISE}. Support stability is measured by Jaccard similarity between consecutive supports, with convergence defined by similarity $\geq 0.90$. For $\boldsymbol{A}^{(t)}$, $\boldsymbol{H}^{(t)}$, and $\boldsymbol{O}^{(t)}$, consecutive recurrent states are compared using Jensen--Shannon divergence, relative $\ell_2$ change, and relative $\ell_2$ change, respectively; each convergence threshold is set to $0.05$ times the example-specific early-trajectory scale computed from the first four finite transition distances of that quantity. All four criteria must hold for three consecutive transitions, and $\tau_X$ is the first transition in the qualifying run; trajectories that do not converge by $T=32$ are recorded as $\tau_X=33$. We apply the same diagnostic definitions and thresholds to Huginn and Recurrent-Llama-T32 without retuning. These diagnostics characterize recurrent dynamics only and do not determine the \texttt{WISE} switching depth or working-set configuration. We separately evaluate the stability of the exact block support deployed by \texttt{WISE} in Appendix~\ref{app:block_support_stability}.


\paragraph{Cross-backbone replication.}
The Recurrent-Llama replication uses a model-specific instrumentation adapter to expose recurrent attention, hidden representations, and pre-projection attention outputs while preserving the model's native computation.
The adapter accounts for Llama-style rotary position embeddings and grouped-query attention but does not modify model parameters, recurrent depth, attention scores, or support construction.
The behavioral replication evaluates the original five support-selection conditions---Full, \textsc{Static-95}, \textsc{MassMatched}, \textsc{SizeMatched}, and \texttt{WISE}---using the same definitions and intervention timing as on the primary Huginn backbone.
In particular, \textsc{SizeMatched} exactly matches the number of retained blocks used by \texttt{WISE} for every local routing unit.
\textsc{Freeze-A@12} and \textsc{Truncate@12} are additional causal interventions evaluated only on the primary Huginn backbone.

\paragraph{Context-length scaling.}
Matched HotpotQA quality scaling is performed only on the primary Huginn backbone and uses the same 100-example cohort at approximately $512$, $1$K, $2$K, and $4$K context lengths. Required answer evidence is preserved at every length, while additional context is formed from natural benchmark passages rather than synthetic padding or duplicated text. Full and \texttt{WISE} receive exactly the same input at each context length. Because absolute Full-model difficulty varies with context composition, the primary scaling statistic is the paired difference $\Delta\mathrm{F1}=\mathrm{F1}_{\mathrm{WISE}}-\mathrm{F1}_{\mathrm{Full}}$ rather than the absolute F1 trajectory across lengths. The Huginn checkpoint supports at most $4096$ positions through its rotary-position representation; consequently, $4$K is the maximum valid model-level context length considered in the scaling study, and we report no model-level $8$K quality result.

\paragraph{Systems evaluation.}
Systems measurements are performed only for the primary Huginn configuration on NVIDIA RTX A6000 GPUs. The practical dense baseline uses PyTorch scaled dot-product attention dispatching to native FlashAttention, while the final \texttt{WISE} implementation uses a custom Triton kernel specialized for exact $B=32$ block-structured reuse. Systems measurements use a fixed 30-example cohort of saved HotpotQA working-set workloads, distinct from the 100-example quality-scaling cohort. Headline measurements use 10 balanced timing passes on an uncontended RTX A6000 under a fixed software stack, with latency reported as the median across passes and the corresponding interquartile range retained to characterize timing variability. We report three complementary measurements: setup-inclusive late-reuse latency, which compares one-time sparse-schedule preparation plus 20 reuse calls against 20 native FlashAttention calls; complete $T=32$ attention-trajectory latency, which includes the 12 unrestricted discovery steps together with schedule preparation and 20 reuse steps; and a controlled same-backend comparison in which the optimized exact-$B=32$ kernel executes either all causally valid blocks or only the \texttt{WISE} support. Latency and answer-quality results are therefore treated as complementary measurements rather than as measurements on the same evaluation cohort.

\paragraph{Sparse-kernel implementation.}
The custom kernel consumes the exact block support discovered by \texttt{WISE} without introducing an additional approximation. For each recurrent block, attention head, and query block, the active key blocks are encoded as a reusable GPU-resident CSR schedule consisting of row offsets and active block indices. This schedule is constructed once after the discovery phase and reused unchanged across all subsequent recurrent steps. At execution time, the kernel skips blocks absent from the sparse schedule while fusing query--key scoring, causal masking, online softmax normalization, and value aggregation without materializing the dense attention matrix. Queries, keys, and values are recomputed at every recurrent step, so only the routing support is reused, and the measured sparse path contains no dense-attention fallback. For the primary Huginn workload, attention contains four recurrent attention layers with 55 query and key--value heads and native head dimension $d_h=96$. The final implementation specializes arithmetic for this native head dimension and uses two warps and three pipeline stages. An equal-budget work-order sweep selects the natural within-head row order for the final configuration; these implementation choices affect only kernel realization and do not alter the discovered support or attention semantics. Before timing, the optimized kernel is validated against explicit exact masked attention under the same BF16 correctness criterion used throughout the systems study. Setup-inclusive measurements include GPU sparse-schedule construction and associated execution-metadata preparation, whereas prepared-schedule measurements exclude only this one-time reusable setup cost. All reported Full and \texttt{WISE} comparisons use the same hardware and software configuration, identical inputs, and identical warm-up, synchronization, and timing procedures.


\paragraph{Reproducibility.}
All example identities, context variants, intervention conditions, predictions, working-set masks, and timing records are stored in deterministic manifests. We verify that paired methods use identical inputs and recurrent initialization within each backbone, that \texttt{WISE} begins support reuse only after $t_{\mathrm d}$, that the discovered block support remains fixed throughout reuse, and that hidden states, queries, keys, values, and within-support attention weights continue to evolve. For the cross-backbone replication, we additionally record the Recurrent-Llama checkpoint revision, tokenizer, fixed plain-completion prompt, grouped-query-attention instrumentation, and all exact support-cardinality audits. Exact checkpoint revisions, preprocessing details, prompts, generation parameters, hardware configuration, and implementation code will be released with the final artifact.

\section{Robustness Analysis}
\label{app:robustness}

\subsection{Sensitivity to Convergence Criteria}
\label{app:convergence_sensitivity}

The routing--representation separation is robust to matched changes in the convergence criteria. We evaluate five settings ranging from very strict to very loose, varying the strictness of the routing and representation diagnostics consistently. To summarize the separation, we define the routing--representation gap as
\[
G=\min(\tau_H,\tau_O)-\max(\tau_S,\tau_A),
\]
such that $G>0$ requires both routing quantities to stabilize before either representation quantity. Across all five matched settings, $G>0$ for every diagnostic example: $30/30$ on HotpotQA and $30/30$ on GSM8K. The mean gap ranges from $4.33$ to $6.67$ recurrent steps on HotpotQA and from $7.80$ to $17.50$ on GSM8K. The largest GSM8K gaps under the strictest criteria should be interpreted cautiously because representation convergence is strongly right-censored at the $T=32$ endpoint. These results show that the qualitative routing--representation separation does not depend on the particular default threshold choice.

\begin{figure*}[t]
\centering
\begin{minipage}[t]{0.32\textwidth}
\centering
\includegraphics[width=\linewidth]{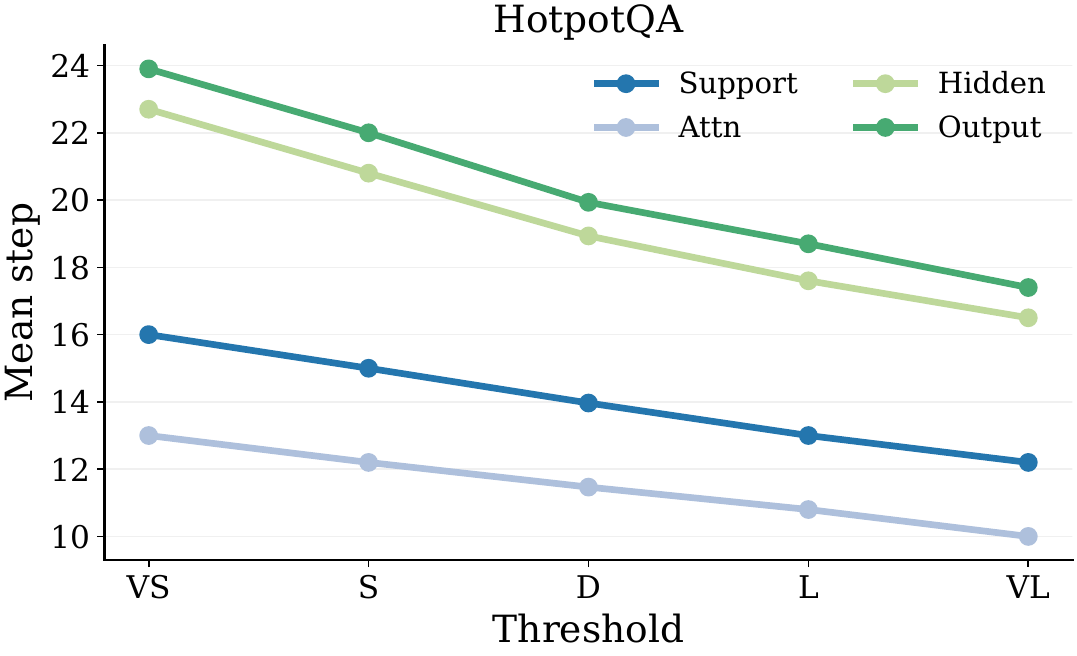}
\vspace{0.5mm}
\textbf{(a) HotpotQA}
\end{minipage}
\hfill
\begin{minipage}[t]{0.32\textwidth}
\centering
\includegraphics[width=\linewidth]{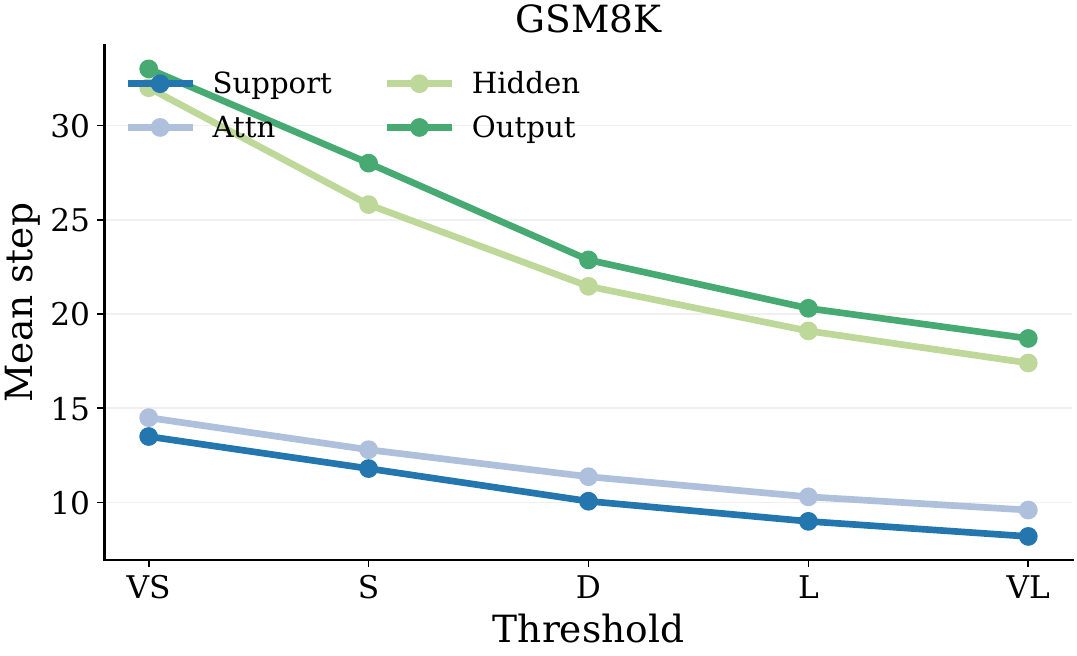}
\vspace{0.5mm}
\textbf{(b) GSM8K}
\end{minipage}
\hfill
\begin{minipage}[t]{0.32\textwidth}
\centering
\includegraphics[width=\linewidth]{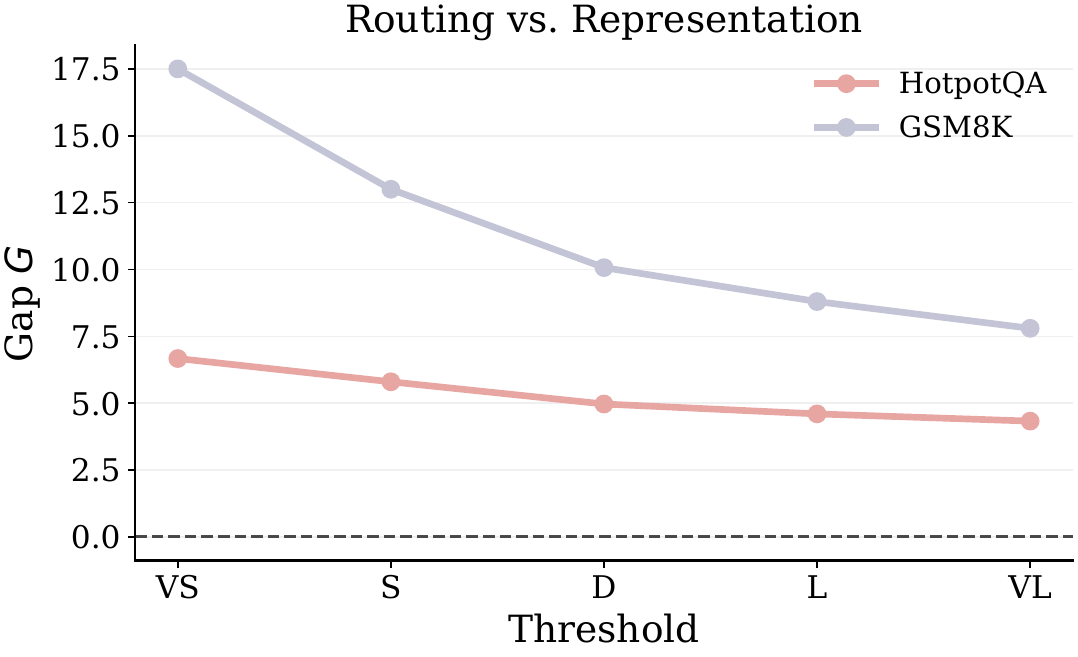}
\vspace{0.5mm}
\textbf{(c) Routing--representation gap}
\end{minipage}
\caption{
\textbf{Sensitivity of the convergence analysis to threshold choice.}
\textbf{(a)--(b)} Fraction of examples classified as converged by recurrent step under five \emph{matched} threshold settings: very loose (VL), loose (L), default (D), strict (S), and very strict (VS). Across these matched settings, routing-related quantities consistently converge earlier than representation-related quantities, although the exact convergence times and step gaps vary with threshold strictness.
\textbf{(c)} Mean routing--representation gap under the same matched settings on HotpotQA and GSM8K. Positive values indicate earlier routing convergence. The gap remains positive throughout the matched sweep, supporting a robust routing--representation separation under reasonable threshold perturbations, while not implying threshold-invariance under arbitrary asymmetric criteria.
}
\label{fig:convergence_threshold_sensitivity}
\end{figure*}

The ordering is not invariant to arbitrary cross-quantity calibration. In an intentionally unfavorable but still reasonable asymmetric setting, routing is evaluated using very strict criteria while representation is evaluated using loose criteria. On HotpotQA, the resulting mean convergence steps are $\tau_S=18.90$, $\tau_A=13.20$, $\tau_H=17.33$, and $\tau_O=18.20$, giving a mean gap of $G=-1.57$; only $1/30$ examples retains $G>0$. We therefore do not claim a threshold-independent ordering across heterogeneous quantities. Rather, the empirical claim is that routing stabilizes earlier under the reported criteria and throughout the tested family of consistently calibrated threshold perturbations.

\begin{table}[t]
\centering
\small
\caption{\textbf{Boundary of the convergence claim on HotpotQA.}
Matched calibration preserves the routing--representation separation, while deliberately asymmetric calibration can reverse the measured ordering.}
\label{tab:asymmetric_convergence}
\setlength{\tabcolsep}{4.5pt}
\begin{tabular}{lccccc}
\toprule
\textbf{Criteria}
& $\tau_S$
& $\tau_A$
& $\tau_H$
& $\tau_O$
& $G$ \\
\midrule
Default
& 13.97 & 11.47 & 18.93 & 19.93 & 4.97 \\
Routing very strict / repr.\ loose
& 18.90 & 13.20 & 17.33 & 18.20 & $-1.57$ \\
\bottomrule
\end{tabular}
\end{table}

\subsection{Stability of the Deployed Working Set}
\label{app:block_support_stability}

The diagnostic support used in Figure~\ref{fig:wise_motivation}\subref{fig:mechanism} is token-level, whereas \texttt{WISE} reuses a four-step union of direct $B=32$, $\eta=0.95$ cumulative-mass block supports.
We therefore evaluate the stability of the exact working-set construction used by the method.
For each recurrent step $t$, define the candidate rolling working set
\begin{equation}
\mathcal{W}_q^{(t)}
=
\bigcup_{s=\max(1,t-3)}^{t}
\mathcal{B}_q^{(s,\eta)}.
\end{equation}
For an example $e$, let $\mathcal{W}_e^{(t)}$ denote the collection of active key blocks across all recurrent blocks, attention heads, and query blocks.
We measure adjacent rolling-set stability as
\begin{equation}
J_{\mathrm{roll}}^{(t)}
=
\frac{1}{N}
\sum_{e=1}^{N}
\frac{
\left|
\mathcal{W}_e^{(t-1)}
\cap
\mathcal{W}_e^{(t)}
\right|
}{
\left|
\mathcal{W}_e^{(t-1)}
\cup
\mathcal{W}_e^{(t)}
\right|
}.
\end{equation}
This statistic exactly matches the block-structured working-set construction used by \texttt{WISE}; for $t>12$, the rolling sets are reconstructed from the saved unrestricted Full-attention trajectory for diagnostic purposes, whereas deployed \texttt{WISE} freezes $\mathcal{W}^{(12)}$ after discovery.

The rolling working set becomes highly stable near the chosen discovery depth.
At $t=12$, the mean adjacent rolling-set Jaccard is $0.9674$ on the $30$-example $4$K HotpotQA audit cohort.
Under the $0.90$-for-three-transitions criterion, all $30/30$ examples first satisfy the stability condition at $t=12$.
Under the stricter $0.95$ criterion, none satisfy it by $t=12$, while all $30/30$ satisfy it at $t=13$.
Thus, $t_{\mathrm d}=12$ lies at the onset of a high-stability regime rather than marking an exact convergence boundary.

\begin{figure*}[t]
\centering
\begin{minipage}[t]{0.48\textwidth}
    \centering
    \includegraphics[width=\linewidth]{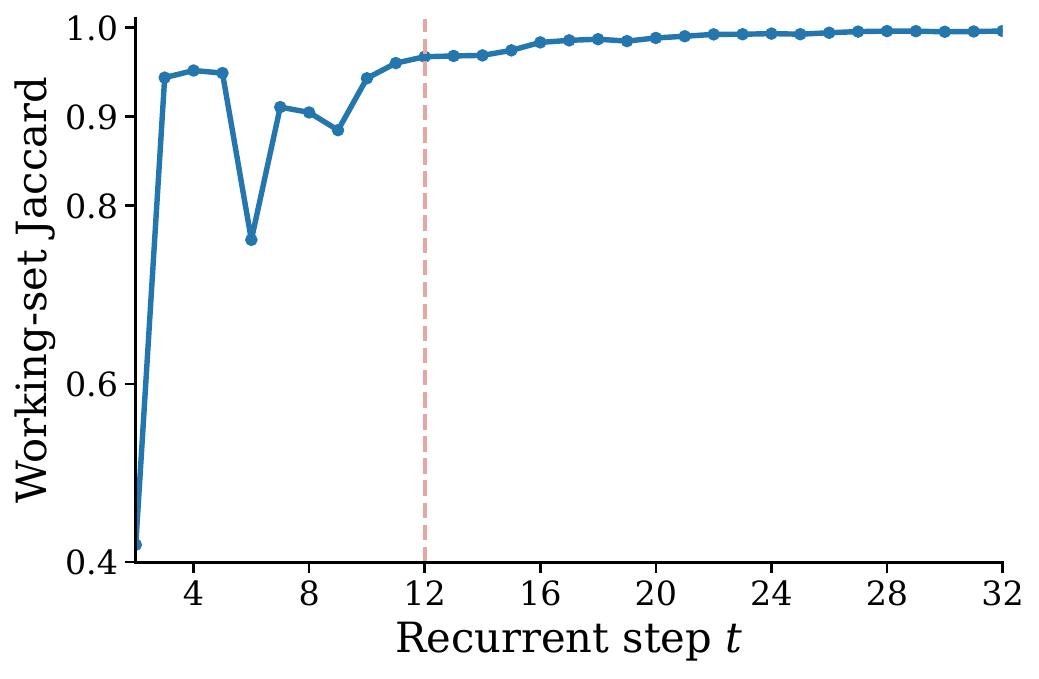}
    \vspace{0.5mm}
    \textbf{(a) Rolling working-set stability}
    \label{fig:rolling_working_set_stability}
\end{minipage}
\hfill
\begin{minipage}[t]{0.48\textwidth}
    \centering
    \includegraphics[width=\linewidth]{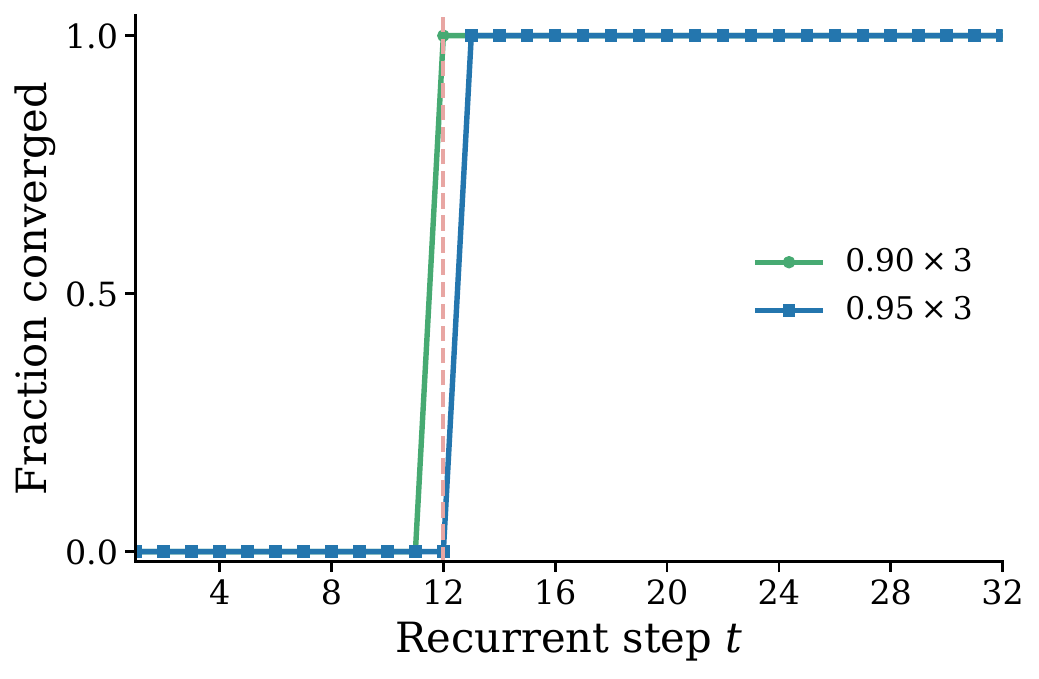}
    \vspace{0.5mm}
    \textbf{(b) Fraction of converged examples}
    \label{fig:block_support_convergence}
\end{minipage}

\caption{
\textbf{The deployed block support becomes highly stable near the \texttt{WISE} discovery depth.}
\textbf{(a)} Adjacent-step Jaccard similarity between four-step rolling working sets constructed from the exact $B=32$, $\eta=0.95$ block supports used by \texttt{WISE}. The discovery depth $t_{\mathrm d}=12$ lies near the onset of a high-stability regime.
\textbf{(b)} Fraction of examples whose rolling-set Jaccard satisfies the stability criterion for three consecutive transitions. All $30$ examples satisfy the $0.90$ criterion by $t=12$, while all satisfy the stricter $0.95$ criterion by $t=13$.
}
\label{fig:block_support_stability_appendix}
\end{figure*}

Local stability does not imply that the support becomes globally fixed.
Comparing the working set selected at the switching point with the hypothetical rolling set constructed from the final Full-attention steps gives
\begin{equation}
J\!\left(
\mathcal{W}^{(12)},
\mathcal{W}^{(32)}
\right)
=
0.8987.
\end{equation}
This slow cumulative drift is consistent with the role of the temporal union: \texttt{WISE} does not assume that routing becomes exactly constant at a single recurrent step, but instead freezes a working set after routing has entered a low-drift regime.
Together with the high future Full-attention mass retained by \texttt{WISE}, this analysis provides a direct method-aligned bridge between the token-level mechanism diagnostic and the block-structured support actually reused during inference.

\subsection{Cross-Backbone Replication}
\label{app:cross_backbone}

We next test whether the observed separation between routing stabilization and representation refinement is specific to the primary Huginn checkpoint. We repeat the mechanism analysis on Recurrent-Llama-T32 using the same recurrent depth $T=32$, the same diagnostic definitions, and the same convergence thresholds described in Appendix~\ref{app:experimental_details}. The replication uses fixed 30-example HotpotQA and GSM8K cohorts and does not retune any convergence criterion for the new backbone.
As shown in Table~\ref{tab:recurrent_llama_mechanism}, the routing--representation separation replicates strongly. On HotpotQA, attention support and attention distributions converge at mean steps $5.07$ and $7.00$, respectively, while the hidden state and attention output converge at $8.93$ and $16.13$. On GSM8K, the corresponding values are $3.87$, $7.10$, $9.63$, and $31.20$. Most importantly, every diagnostic example exhibits a positive routing--representation gap: $30/30$ on HotpotQA and $30/30$ on GSM8K. The precise ordering within routing differs from the primary Huginn backbone---support stabilizes before the attention distribution on Recurrent-Llama-T32---but the broader separation between routing stabilization and continued representation refinement is preserved.

\begin{table}[t]
\centering
\small
\caption{\textbf{The routing--representation separation replicates on Recurrent-Llama-T32.}
Mean convergence steps use the same diagnostic definitions and thresholds as the primary Huginn analysis. A positive gap requires both routing quantities to stabilize before both representation quantities.}
\label{tab:recurrent_llama_mechanism}
\setlength{\tabcolsep}{5.5pt}
\begin{tabular}{lccccc}
\toprule
\textbf{Benchmark}
& $\boldsymbol{\tau_S}$
& $\boldsymbol{\tau_A}$
& $\boldsymbol{\tau_H}$
& $\boldsymbol{\tau_O}$
& \textbf{Positive gap} \\
\midrule
HotpotQA
& \textbf{5.07}
& \textbf{7.00}
& 8.93
& 16.13
& \textbf{30/30} \\
GSM8K
& \textbf{3.87}
& \textbf{7.10}
& 9.63
& $31.20^{\dagger}$
& \textbf{30/30} \\
\bottomrule
\end{tabular}

\vspace{1mm}
{\footnotesize
$^{\dagger}$For GSM8K, $27/30$ attention-output trajectories do not converge by $T=32$ and are recorded as $\tau_O=33$ when computing the mean; $31.20$ therefore reflects right censoring rather than an observed convergence time.}
\end{table}

We also repeat the main behavioral intervention study using the same frozen \texttt{WISE} configuration and the same Full, \textsc{Static-95}, \textsc{MassMatched}, and \textsc{SizeMatched} controls. Exact \textsc{SizeMatched} cardinality matching succeeds for all $1{,}478{,}784$ audited local routing units. The result is more nuanced than on the primary backbone. On HotpotQA, \texttt{WISE} obtains F1 $0.2159$, compared with $0.2075$ for Full and $0.2076$ for \textsc{SizeMatched}, corresponding to a paired \texttt{WISE}--\textsc{SizeMatched} difference of $+0.0083$. On 2WikiMultiHopQA, the corresponding values are $0.2085$, $0.2172$, and $0.2128$, giving a paired difference of $-0.0043$. Thus, unlike on Huginn, the cross-backbone replication does not establish a downstream F1 advantage of recurrently discovered support over an exactly size-matched early-static support.

\begin{table*}[t]
\centering
\small
\caption{\textbf{Behavioral replication on Recurrent-Llama-T32.}
\textsc{SizeMatched} and \texttt{WISE} use identical support cardinality for every routing unit. Future denotes retained late Full-attention mass. Paired confidence intervals are reported in the text.}
\label{tab:recurrent_llama_behavior}
\setlength{\tabcolsep}{7pt}
\begin{tabular}{lcccccc}
\toprule
\textbf{Benchmark}
& \textbf{Full}
& \textbf{SizeMatched}
& \textbf{\texttt{WISE}}
& \textbf{Density}
& \textbf{Future: Size / \texttt{WISE}}
& $\boldsymbol{\Delta}$\textbf{F1} \\
\midrule
HotpotQA
& 0.2075
& 0.2076
& \textbf{0.2159}
& 0.338
& 96.13\% / \textbf{96.56\%}
& $+0.0083$ \\
2Wiki
& \textbf{0.2172}
& 0.2128
& 0.2085
& 0.395
& 96.41\% / \textbf{96.81\%}
& $-0.0043$ \\
\bottomrule
\end{tabular}
\end{table*}


\subsection{When Does Delayed Discovery Help?}
\label{app:early_routing_predictiveness}

The cross-backbone comparison suggests a natural boundary condition for the benefit of \texttt{WISE}. Although Recurrent-Llama-T32 exhibits the same routing-before-representation separation, its routing support stabilizes substantially earlier than on Huginn. On HotpotQA, the mean token-support convergence step decreases from $13.97$ on Huginn to $5.07$ on Recurrent-Llama-T32; on GSM8K, it decreases from $10.07$ to $3.87$.
This earlier stabilization is reflected in how predictive an early static support is of later Full attention. On Huginn, the exactly size-matched early-static support retains $94.65\%$ and $94.87\%$ of late Full-attention mass on HotpotQA and 2WikiMultiHopQA, respectively. On Recurrent-Llama-T32, the corresponding values increase to $96.13\%$ and $96.41\%$, despite lower retained block density. Consequently, recurrent discovery increases late-mass coverage by approximately $2$ percentage points on Huginn but only about $0.4$ percentage points on Recurrent-Llama-T32.
These results suggest that delayed recurrent discovery is most useful when routing continues to reorganize during early recurrence. When an early support already predicts later attention well, an early static support can approximate the eventually discovered working set more closely, leaving less room for \texttt{WISE} to improve support composition. We treat this interpretation as suggestive rather than definitive: the saved cross-backbone trajectories do not contain the exact per-unit $B=32$ support identities and attention masses required to measure direct early-to-late block-level Jaccard similarity or head-level routing specialization. Moreover, attention distributions and hidden representations also converge more quickly on Recurrent-Llama-T32, so the current evidence does not isolate stable routing roles from broader differences in recurrent dynamics.

\begin{table}[t]
\centering
\small
\caption{\textbf{Early routing is more predictive on Recurrent-Llama-T32.}
\textsc{SizeMatched} Future is the fraction of late unrestricted Full-attention mass retained by the exactly size-matched early-static support. Gain is the additional late-mass coverage obtained by \texttt{WISE}.}
\label{tab:early_routing_predictiveness}
\setlength{\tabcolsep}{5pt}
\begin{tabular}{llcc}
\toprule
\textbf{Backbone}
& \textbf{Benchmark}
& \textbf{SizeMatched Future}
& \textbf{\texttt{WISE} gain} \\
\midrule
Huginn
& HotpotQA
& 94.65\%
& +2.01 pp \\
Huginn
& 2Wiki
& 94.87\%
& +2.04 pp \\
Recurrent-Llama-T32
& HotpotQA
& \textbf{96.13\%}
& \textbf{+0.43 pp} \\
Recurrent-Llama-T32
& 2Wiki
& \textbf{96.41\%}
& \textbf{+0.40 pp} \\
\bottomrule
\end{tabular}
\end{table}


\subsection{\texttt{WISE} Hyperparameter Sensitivity}
\label{app:wise_sensitivity}

Finally, we test whether the frozen \texttt{WISE} configuration is an isolated operating point on the primary Huginn backbone. Starting from $\eta=0.95$, $t_{\mathrm d}=12$, temporal-union width $w=4$, and $B=32$, we vary one parameter at a time while keeping all others fixed. This analysis is intended as a local robustness check rather than hyperparameter optimization; all initial comparisons use the same deterministic $N=50$ HotpotQA native-context cohort, for which matched Full inference obtains F1 $0.1798$. As shown in Table~\ref{tab:wise_sensitivity}, no tested neighboring configuration clearly dominates the frozen default. Varying $\eta$ exposes the expected sparsity--coverage tradeoff: decreasing $\eta$ from $0.95$ to $0.90$ reduces block density from $0.447$ to $0.298$ while reducing retained future attention mass from $0.966$ to $0.933$, whereas increasing $\eta$ to $0.98$ raises future-mass retention to $0.987$ at the cost of density increasing to $0.651$. Nearby discovery depths and temporal-union widths similarly yield comparable downstream behavior while changing the retained routing structure.

\begin{table*}[t]
\centering
\scriptsize
\setlength{\tabcolsep}{4pt}
\caption{\textbf{Local sensitivity of \texttt{WISE} on HotpotQA ($N=50$).}
Each row varies one parameter around the frozen default. Density is the retained $B=32$ block fraction and Future Mass is the fraction of later unrestricted Full-attention mass covered by the frozen support.}
\label{tab:wise_sensitivity}
\begin{tabular}{lccccc}
\toprule
\textbf{Setting}
& \textbf{F1}
& $\boldsymbol{\Delta}$\textbf{F1}
& \textbf{95\% CI}
& \textbf{Density}
& \textbf{Future Mass} \\
\midrule
Default ($\eta=0.95,\ t_d=12,\ w=4$)
& 0.1899
& $+0.0101$
& $[-0.0041,+0.0303]$
& 0.447
& 0.966 \\
$\eta=0.90$
& 0.1813
& $+0.0015$
& $[-0.0295,+0.0331]$
& 0.298
& 0.933 \\
$\eta=0.98$
& 0.1816
& $+0.0019$
& $[-0.0011,+0.0067]$
& 0.651
& 0.987 \\
$t_d=8$
& 0.1795
& $-0.0003$
& $[-0.0153,+0.0122]$
& 0.498
& 0.972 \\
$t_d=16$
& 0.1817
& $+0.0019$
& $[-0.0049,+0.0096]$
& 0.431
& 0.964 \\
$w=1$
& 0.1683
& $-0.0115$
& $[-0.0342,+0.0061]$
& 0.413
& 0.961 \\
$w=2$
& 0.1816
& $+0.0018$
& $[-0.0040,+0.0079]$
& 0.426
& 0.963 \\
\bottomrule
\end{tabular}
\end{table*}

The initially larger degradation observed for $w=1$ was evaluated on the untouched second half of the canonical $N=100$ cohort. On the first $50$ examples, $w=1$ minus the default yielded $\Delta\mathrm{F1}=-0.0216$ with 95\% CI $[-0.0458,-0.0029]$, whereas the untouched second $50$ yielded $+0.0089$ with 95\% CI $[-0.0111,+0.0425]$. Pooling all $100$ examples gives $\Delta\mathrm{F1}=-0.0063$ with 95\% CI $[-0.0234,+0.0140]$, so the initial quality deficit does not replicate. We therefore interpret the sensitivity analysis as evidence that the default is not an obviously fragile isolated point, rather than evidence of hyperparameter insensitivity or optimality. These experiments are restricted to benchmark-native contexts and one-at-a-time perturbations; they do not establish long-context hyperparameter robustness or robustness to joint interactions among $\eta$, $t_{\mathrm d}$, and $w$.

\subsection{Block-Size Tradeoff}
\label{app:block_size}

At the finest extreme, token-level support would maximize routing granularity but produce highly irregular sparsity; block structure instead exposes a controllable systems tradeoff between sparsity and executable regularity.
The block size $B$ controls the granularity at which the discovered working set can be exploited by sparse execution.
We therefore evaluate $B\in\{16,32,64,128\}$ while keeping the remaining \texttt{WISE} configuration fixed: $\eta=0.95$, $t_d=12$, and the four-step discovery window spanning recurrent steps 9--12, and use an equal implementation-tuning budget for each block size.
Finer blocks expose more routing sparsity, whereas coarser blocks conservatively retain more neighboring tokens.
However, block density alone does not determine execution efficiency, since the granularity of the resulting sparse computation also affects kernel execution.
All four implementations are validated against explicit exact masked attention under the same BF16 correctness criterion, with maximum absolute error below $0.007643$ and no support or causal-mask mismatches. For $B=64$ and $B=128$, the independently selected block supports are executed using exact $32\times32$ physical subtiles without changing support membership or softmax semantics.

Table~\ref{tab:block_size_full} summarizes the full tradeoff.
On the matched $N=30$ 2K systems panel, $B=16$ produces the sparsest support at $31.12\%$ density but reaches only a $1.141\times$ setup-inclusive speedup.
Increasing to $B=32$ raises density modestly to $37.51\%$ while achieving the highest speedup of $1.656\times$.
Coarser $B=64$ and $B=128$ retain $44.71\%$ and $54.44\%$ of blocks and reach $1.440\times$ and $1.236\times$, respectively.
The same ordering persists at 4K, where $B=32$ reaches $1.769\times$, compared with $1.146\times$, $1.554\times$, and $1.394\times$ for $B=16$, $64$, and $128$.
Thus, finer sparsity does not necessarily translate into faster execution: $B=32$ forms a robust systems-facing knee between support sparsity and executable regularity.

\begin{table*}[t]
\centering
\small
\caption{
\textbf{Block-size ablation for \texttt{WISE}.}
Quality and routing statistics use the matched $N=100$ benchmark cohorts, while systems statistics use the matched $N=30$ 2K panel.
Future denotes the fraction of future Full-attention mass retained by the discovered working set.
Systems speedup is the setup-inclusive Dense/\texttt{WISE} attention-latency ratio under the matched block-size execution harness.
}
\label{tab:block_size_full}
\setlength{\tabcolsep}{3.6pt}
\begin{tabular}{c|ccc|ccc|ccc}
\toprule
& \multicolumn{3}{c|}{HotpotQA}
& \multicolumn{3}{c|}{2WikiMultiHopQA}
& \multicolumn{3}{c}{2K systems} \\
$B$
& F1 & Density & Future
& F1 & Density & Future
& Density & Future & Speedup \\
\midrule
16
& 0.2254 & 37.60\% & 96.34\%
& 0.2831 & 41.19\% & 96.53\%
& 31.12\% & 96.22\% & 1.141$\times$ \\
\textbf{32}
& 0.2317 & 44.99\% & 96.66\%
& 0.2497 & 49.58\% & 96.92\%
& 37.51\% & 96.46\% & \textbf{1.656$\times$} \\
64
& 0.2463 & 53.51\% & 97.11\%
& 0.2599 & 58.97\% & 97.48\%
& 44.71\% & 96.83\% & 1.440$\times$ \\
128
& 0.2309 & 64.26\% & 97.75\%
& 0.2562 & 70.40\% & 98.24\%
& 54.44\% & 97.36\% & 1.236$\times$ \\
\bottomrule
\end{tabular}
\end{table*}

Downstream quality exhibits a substantially weaker dependence on block size than execution efficiency.
On HotpotQA, $B=16$ differs from $B=32$ by only $-0.0064$ F1, with a paired 95\% bootstrap confidence interval of $[-0.0288,\,0.0166]$.
On 2WikiMultiHopQA, the corresponding difference is $+0.0334$, but its interval $[-0.0043,\,0.0738]$ also includes zero.
Likewise, although $B=64$ attains higher observed F1 than $B=32$ on both benchmarks, the paired difference is not established on HotpotQA ($+0.0145$, 95\% CI $[-0.0097,\,0.0450]$), while the smaller 2Wiki gain is positive ($+0.0102$, 95\% CI $[0.0005,\,0.0228]$).
The sweep therefore does not identify a block size that consistently dominates downstream quality across both benchmarks.
Instead, $B=32$ is selected as the default systems-facing operating point because it provides the strongest overall balance between routing sparsity and realized execution efficiency, rather than because it is universally optimal for F1.

\subsection{Sensitivity to the Working-Set Mass Threshold}
\label{app:eta_sensitivity}

We evaluate sensitivity to the cumulative-mass threshold $\eta$ while holding all other aspects of \texttt{WISE} fixed: $T=32$, $t_{\mathrm d}=12$, $B=32$, and the four-step discovery window spanning recurrent steps 9--12.
The threshold controls the conservativeness of working-set construction: lower values retain fewer blocks, while higher values preserve more of the Full-attention mass.
Lowering $\eta$ to $0.90$ produces substantially sparser working sets, but also reduces retained future attention mass and increases behavioral changes relative to Full.
Increasing $\eta$ to $0.99$ preserves nearly all future attention mass, but raises block density to roughly $80\%$ without yielding a consistent quality improvement across benchmarks.
The default $\eta=0.95$ therefore provides a reasonable quality--sparsity operating point rather than relying on a narrowly tuned F1 optimum.

\begin{table}[t]
\centering
\caption{Sensitivity to the working-set mass threshold $\eta$.
All settings other than $\eta$ are fixed.
$\Delta$F1 is measured relative to the matched Full run, and Future denotes retained future Full-attention mass.}
\label{tab:eta_sensitivity}
\begin{tabular}{lccccc}
\toprule
Benchmark & $\eta$ & F1 & $\Delta$F1 (95\% CI) & Density & Future \\
\midrule
HotpotQA
& 0.90 & 0.2102 & $-0.0161$ [$-0.0489$, $+0.0133$] & 30.2\% & 93.3\% \\
& 0.95 & 0.2317 & $+0.0054$ [$-0.0171$, $+0.0261$] & 45.0\% & 96.7\% \\
& 0.99 & 0.2212 & $-0.0051$ [$-0.0123$, $0.0000$] & 77.8\% & 99.4\% \\
\midrule
2WikiMultihopQA
& 0.90 & 0.2309 & $-0.0207$ [$-0.0542$, $+0.0077$] & 35.2\% & 93.7\% \\
& 0.95 & 0.2497 & $-0.0019$ [$-0.0133$, $+0.0077$] & 49.6\% & 96.9\% \\
& 0.99 & 0.2541 & $+0.0025$ [$0.0000$, $+0.0075$] & 80.0\% & 99.5\% \\
\bottomrule
\end{tabular}
\end{table}








\end{document}